\newcommand{\ra}[1]{\renewcommand{\arraystretch}{#1}}
\newcommand{\R}{\mathbb{R}}
\newcommand{\C}{\mathbf{C}}
\newcommand{\X}{\mathbf{X}}
\newcommand{\Z}{\mathbf{Z}}
\newcommand{\W}{\mathbf{W}}
\newcommand{\K}{\mathbf{K}}
\newcommand{\x}{\mathbf{x}}
\newcommand{\p}{\mathbf{p}}
\newcommand{\Pb}{\mathbf{P}}
\newcommand{\Qb}{\mathbf{Q}}
\newcommand{\Hb}{\mathbf{H}}
\newcommand{\gammab}{\boldsymbol{\gamma}}
\newcommand{\lambdab}{\boldsymbol{\lambda}}
\newcommand{\eg}{\textit{e.g.}, }
\newcommand{\ie}{\textit{i.e.}, }
\newcommand{\KL}{\operatorname{KL}}
\newcommand{\diag}{\operatorname{diag}}
\newcommand{\integ}[1]{{[\![#1]\!]}}
\DeclareMathOperator*{\argmin}{arg\,min}
\definecolor{cadmiumred}{rgb}{0.89, 0.0, 0.13}
\newcommand{\red}[1]{{
\color{cadmiumred}
#1}}
\definecolor{cadmiumgreen}{rgb}{0.0, 0.42, 0.24}
\newcommand{\green}[1]{{
\color{cadmiumgreen}
#1}}
\definecolor{OliveGreen}{rgb}{0.33, 0.62, 0.18}
\theoremstyle{plain}
\newtheorem{theorem}{Theorem}
\newtheorem{lemma}[theorem]{Lemma}
\theoremstyle{definition}
\newtheorem{remark}[theorem]{Remark}
\title{SNEkhorn: Dimension Reduction with Symmetric Entropic Affinities}
\author{  Hugues Van Assel \\
  ENS de Lyon, CNRS \\
  UMPA UMR 5669 \\
  \texttt{hugues.van\textunderscore assel@ens-lyon.fr} \\
    \And
  Titouan Vayer \\
  Univ. Lyon, ENS de Lyon, UCBL, CNRS, Inria \\
  LIP UMR 5668 \\
  \texttt{titouan.vayer@inria.fr} \\
  \And
  Rémi Flamary \\
  École polytechnique, IP Paris, CNRS \\
  CMAP UMR 7641 \\
  \texttt{remi.flamary@polytechnique.edu} \\
  \And
  Nicolas Courty \\
  Université Bretagne Sud, CNRS \\
  IRISA UMR 6074 \\
  \texttt{nicolas.courty@irisa.fr} \\
}
\begin{document}
\maketitle 

\vskip 0.3in

\begin{abstract}
Many approaches in machine learning rely on a weighted graph to encode the
similarities between samples in a dataset. Entropic affinities (EAs), which are notably used in the popular Dimensionality Reduction (DR) algorithm t-SNE, are particular instances of such graphs. To ensure robustness to heterogeneous sampling densities, EAs assign a kernel bandwidth parameter to every sample in such a way that the entropy of each row in the affinity matrix is kept constant at a specific value, whose exponential is known as perplexity. EAs are inherently asymmetric and row-wise stochastic, but they are used in DR approaches after undergoing heuristic symmetrization methods that violate both the row-wise constant entropy and stochasticity properties. In this work, we uncover a novel characterization of EA as an optimal transport problem, allowing a natural symmetrization that can be computed efficiently using dual ascent. 
The corresponding novel affinity matrix derives advantages from symmetric doubly stochastic normalization in terms of clustering performance, while also effectively controlling the entropy of each row thus making it particularly robust to varying noise levels. Following, we present a new DR algorithm, SNEkhorn, that leverages this new affinity matrix. We show its clear superiority to existing approaches with several indicators on both synthetic and real-world datasets.
\end{abstract}

\section{Introduction}

Exploring and analyzing high-dimensional data is a core problem of data science that requires building low-dimensional and interpretable
representations of the data through dimensionality reduction (DR). Ideally, these representations should preserve the data structure by mimicking, in the reduced representation space (called \emph{latent space}), a notion of similarity between samples. 
We call \emph{affinity} the weight matrix of a graph that encodes this similarity. It has positive entries and the higher the weight in position $(i,j)$, the
higher the similarity or proximity between samples $i$ and $j$.
Seminal approaches relying on affinities include Laplacian
eigenmaps \cite{belkin2003laplacian}, spectral clustering
\cite{von2007tutorial} and semi-supervised learning \cite{zhou2003learning}. Numerous methods can be employed to construct such affinities. A common choice is to use a kernel (\eg Gaussian) derived from a distance matrix normalized by a bandwidth parameter that usually has a large influence on the outcome of the algorithm. 
Indeed, excessively small kernel bandwidth can result in solely capturing the positions of closest neighbors, at the expense of large-scale dependencies. Inversely, setting too large a bandwidth blurs information about close-range pairwise relations. Ideally, one should select a different bandwidth for each point to accommodate varying sampling densities and noise levels. One approach is to compute the bandwidth of a point based on the distance from its $k$-th nearest neighbor \cite{zelnik2004self}. However, this method fails to consider the entire distribution of distances.
In general, selecting appropriate kernel bandwidths can be a laborious task, and many practitioners resort to greedy search methods. This can be limiting in some settings, particularly when dealing with large sample sizes.

\textbf{Entropic Affinities and SNE/t-SNE.}
Entropic affinities (EAs) were first introduced in the
seminal paper \emph{Stochastic Neighbor Embedding} (SNE) 
\cite{hinton2002stochastic}. It consists in normalizing each row $i$ of a distance matrix by a
bandwidth parameter $\varepsilon_i$ such that the distribution associated with each row of the corresponding stochastic (\ie row-normalized) Gaussian affinity has a fixed entropy. The value of this entropy, whose exponential is called
the \emph{perplexity}, is then the only hyperparameter left to tune and has
an intuitive interpretation as the number of effective neighbors of each point \cite{vladymyrov2013entropic}.
EAs are notoriously used to encode pairwise relations in a high-dimensional space for the DR algorithm t-SNE \cite{van2008visualizing}, among other DR methods including \cite{carreira2010elastic}. t-SNE is increasingly popular in many applied fields \cite{kobak2019art,
melit2020unsupervised} mostly due to its ability to represent clusters in the data \cite{linderman2019clustering, JMLR:v23:21-0524}. Nonetheless, one major flaw of EAs is that they are inherently directed and often require post-processing symmetrization.

\textbf{Doubly Stochastic Affinities.}
Doubly stochastic (DS) affinities are non-negative matrices whose rows and columns have unit $\ell_1$ norm.
In many applications, it has been demonstrated that DS affinity normalization (\ie determining the nearest DS matrix to a given affinity matrix) offers numerous benefits. First, it can be seen as a relaxation of k-means \cite{zass2005unifying} and it is well-established that it enhances spectral clustering performances \cite{Ding_understand,Zass,beauchemin2015affinity}. Additionally, DS matrices present the benefit of being invariant to the various Laplacian normalizations \cite{von2007tutorial}. Recent observations indicate that the DS projection of the Gaussian kernel under the $\KL$ geometry is more resilient to heteroscedastic noise compared to its stochastic counterpart \cite{landa2021doubly}. It also offers a more natural analog to the heat kernel \cite{marshall2019manifold}.
These properties have led to a growing interest in DS affinities, with their use expanding to various applications such as smoothing filters \cite{Milanfar}, subspace clustering \cite{lim2020doubly} and transformers \cite{sander2022sinkformers}.

\textbf{Contributions.} In this work, we study the missing
link between EAs, which are easy to tune and adaptable to data with heterogeneous density, and DS affinities which have interesting properties in practical applications as aforementioned. 
Our main contributions are as follows. We uncover the convex
optimization problem that underpins classical entropic affinities, exhibiting
novel links with entropy-regularized Optimal Transport (OT) (\cref{sec:entropic_affinity_semi_relaxed}). We then propose in \cref{subsec:sea} a principled symmetrization of entropic
affinities. The latter enables controlling the entropy in each point, unlike
t-SNE's post-processing symmetrization, and produces a genuinely doubly stochastic affinity. We show how to
compute this new affinity efficiently using a dual ascent algorithm.
In \cref{sec:DR_with_OT}, we introduce SNEkhorn: a DR algorithm that couples this new symmetric entropic affinity with a doubly stochastic kernel in the low-dimensional embedding space, without sphere concentration issue \cite{lu2019doubly}. We finally showcase the benefits of symmetric entropic affinities on a variety of applications in Section \ref{sec:DR_experiments} including spectral clustering and DR experiments on datasets ranging from images to genomics data.

\textbf{Notations.} $\integ{n}$ denotes the set $\{1,...,n\}$. $\exp$ and $\log$ applied to vectors/matrices are taken element-wise. $\bm{1}
= (1,...,1)^\top$ is the vector of $1$. $\langle \cdot, \cdot \rangle$ is the standard inner product for matrices/vectors. $\mathcal{S}$ is the space of $n \times n$ symmetric matrices. $\mathbf{P}_{i:}$ denotes the $i$-th row of a matrix $\mathbf{P}$. $\odot$ (\textit{resp.} $\oslash$) stands for element-wise multiplication (\textit{resp.} division) between vectors/matrices. For $\bm{\alpha}, \bm{\beta} \in \R^n, \bm{\alpha} \oplus \bm{\beta} \in \R^{n \times n}$ is $(\alpha_i + \beta_j)_{ij}$. The entropy of $\p \in \mathbb{R}^{n}_+$ is\footnote{With the convention $0 \log 0 = 0$.} $\operatorname{H}(\p) = -\sum_{i} p_i(\log(p_i)-1) = -\langle \p, \log \p - \bm{1} \rangle$. The Kullback-Leibler divergence between two matrices $\Pb, \Qb$ with nonnegative entries such that $Q_{ij} = 0 \implies P_{ij}=0$ is $\KL(\Pb | \Qb) = \sum_{ij} P_{ij}\left(\log(\frac{P_{ij}}{Q_{ij}})-1\right) = \langle \Pb, \log \left(\Pb \oslash \Qb \right) - \bm{1}\bm{1}^\top \rangle$.

\section{Entropic Affinities, Dimensionality Reduction and Optimal Transport}\label{sec:background}

Given a dataset $\mathbf{X} \in \mathbb{R}^{n \times p}$ of $n$ samples in dimension $p$, most DR algorithms compute a representation of $\mathbf{X}$ in a lower-dimensional latent space $\mathbf{Z} \in \mathbb{R}^{n \times q}$ with $q \ll p$ that faithfully captures and represents pairwise dependencies between the samples (or rows) in $\X$. This is generally achieved by optimizing $\mathbf{Z}$ such that the corresponding affinity matrix matches another affinity matrix defined from $\X$. These affinities are constructed from a matrix $\mathbf{C} \in \R^{n \times n}$ that encodes a notion of ‘‘distance'' between the samples, \eg the squared Euclidean distance $C_{ij} = \|\mathbf{X}_{i:}-\mathbf{X}_{j:}\|_2^2$ or more generally any \emph{cost matrix} $\mathbf{C} \in \mathcal{D} := \{\C \in \mathbb{R}_+^{n \times n} : \C = \C^\top \text{and } C_{ij}=0 \iff i=j\}$. A commonly used option is the Gaussian affinity that is obtained by performing row-wise normalization of the kernel $\exp(-\C / \varepsilon)$, where $\varepsilon >0$ is the bandwidth parameter.

\textbf{Entropic Affinities (EAs).} Another frequently used approach to generate affinities from $\mathbf{C} \in \mathcal{D}$ is to employ \emph{entropic affinities}  \cite{hinton2002stochastic}. The main idea is to consider \emph{adaptive} kernel bandwidths $(\varepsilon^\star_i)_{i \in \integ{n}}$ to capture finer structures in the data compared to constant bandwidths \cite{van2018recovering}. Indeed, EAs rescale distances to account for the varying density across regions of the dataset.
Given $\xi \in \integ{n-1}$, the goal of EAs is to build a Gaussian Markov chain transition matrix $\Pb^{\mathrm{e}}$ with prescribed entropy as
\begin{equation}
\begin{split}
    \forall i, \: &\forall j, \: P^{\mathrm{e}}_{ij} = \frac{\exp{(-C_{ij}/\varepsilon^\star_i)}}{\sum_\ell \exp{(-C_{i\ell}/\varepsilon^\star_i)}} \\
    &\text{with} \:\: \varepsilon^\star_i \in \mathbb{R}^*_+ \:\: \text{s.t.} \: \operatorname{H}(\Pb^{\mathrm{e}}_{i:}) = \log{\xi} + 1\,. \label{eq:entropic_affinity_pb}
\end{split}\tag{EA}
\end{equation}
The hyperparameter $\xi$, which is also known as \emph{perplexity}, can be interpreted as the effective number of neighbors for each data point \cite{vladymyrov2013entropic}. Indeed, a perplexity of $\xi$ means that each row of $\Pb^{\mathrm{e}}$ (which is a discrete probability since $\Pb^{\mathrm{e}}$ is row-wise stochastic) has the same entropy as a uniform distribution over $\xi$ neighbors. Therefore, it provides the practitioner with an interpretable parameter specifying which scale of dependencies the affinity matrix should faithfully capture. In practice, a root-finding algorithm is used to find the bandwidth parameters $(\varepsilon_i^\star)_{i \in \integ{n}}$ that satisfy the constraints \cite{vladymyrov2013entropic}. Hereafter, with a slight abuse of language, we call $e^{\operatorname{H}(\Pb_{i:})-1}$ the perplexity of the point $i$.

\textbf{Dimension Reduction with SNE/t-SNE.} One of the main applications of EAs
is the DR algorithm \texttt{SNE} \cite{hinton2002stochastic}. We
denote by $\C_\X = \left(\|\X_{i:}-\X_{j:}\|_2^2\right)_{ij}$ and $\C_\Z =
\left(\|\Z_{i:}-\Z_{j:}\|_2^2\right)_{ij}$ the cost matrices derived from the
rows (\ie the samples) of $\X$ and $\Z$ respectively. \texttt{SNE} focuses on
minimizing in the latent coordinates $\Z \in \R^{n \times q}$ the objective
$\operatorname{KL}(\Pb^\mathrm{e} | \Qb_\Z)$ where $\Pb^\mathrm{e}$ solves
(\ref{eq:entropic_affinity_pb}) with cost $\C_\X$ and $[\Qb_\Z]_{ij} = \exp(-[\C_\Z]_{ij})/(\sum_{\ell}\exp(-[\C_\Z]_{i\ell}))$. In the seminal paper \cite{van2008visualizing}, a newer proposal for a \emph{symmetric} version was presented, which has since replaced \texttt{SNE} in practical applications. Given a symmetric
normalization for the similarities in latent space $[\widetilde{\Qb}_\Z]_{ij} = \exp(-[\C_\Z]_{ij})/\sum_{\ell,t}\exp(-[\C_\Z]_{\ell t})$ it consists in solving 
\begin{align}\label{symmetrization_tsne}
    \min_{\Z \in \R^{n \times q}} \: \operatorname{KL}(\overline{\Pb^{\mathrm{e}}} | \widetilde{\Qb}_\Z) \quad \text{where} \quad \overline{\Pb^{\mathrm{e}}} = \frac{1}{2}(\Pb^{\mathrm{e}} + \Pb^{\mathrm{e} \top}) \,.
\tag{Symmetric-SNE}
\end{align}
In other words, the affinity matrix $\overline{\Pb^{\mathrm{e}}}$ is the Euclidean projection of $\Pb^{\mathrm{e}}$ on the space of symmetric matrices $\mathcal{S}$: $\overline{\Pb^{\mathrm{e}}} = \operatorname{Proj}^{\ell_2}_{\mathcal{S}}(\Pb^{\mathrm{e}}) = \argmin_{\Pb \in \mathcal{S}}\| \Pb - \Pb^{\mathrm{e}} \|_2$ (see Appendix \ref{sec:sym_proj}). Instead of the Gaussian kernel, the popular extension t-SNE \cite{van2008visualizing} considers a different distribution in the latent space $[\widetilde{\Qb}_\Z]_{ij} = (1 + [\C_{\Z}]_{ij})^{-1}/\sum_{\ell,t}(1 +
[\C_{\Z}]_{\ell t})^{-1}$. In this formulation, $\widetilde{\Qb}_\Z$ is a joint
Student $t$-distribution that accounts for crowding effects: a relatively small
distance in a high-dimensional space can be accurately represented by a
significantly greater distance in the low-dimensional space.  

\noindent\begin{minipage}{0.45\linewidth}
Considering symmetric similarities is appealing since the proximity between two
points is inherently symmetric. Nonetheless, the Euclidean projection in
\eqref{symmetrization_tsne} \emph{does not preserve the construction of entropic
affinities}.  In particular, $\overline{\Pb^{\mathrm{e}}}$ is not stochastic in
general and $\operatorname{H}(\overline{\Pb_{i:}^{\mathrm{e}}}) \neq (\log{\xi} + 1)$ thus the entropy associated with each point is no longer controlled after symmetrization (see the bottom left plot of \cref{fig:coil}). This is
arguably one of the main drawbacks of the approach. By contrast, the
$\Pb^{\mathrm{se}}$ affinity that will be introduced in \cref{sec:sym_entropic_affinity} can
accurately set the entropy in each point to the desired value $\log \xi + 1$.  As shown in \cref{fig:coil} this leads to more faithful embeddings with better separation of the classes when combined with the t-SNEkhorn algorithm (\cref{sec:DR_with_OT}).
\end{minipage}
\hspace{0.01\linewidth}
\begin{minipage}{0.53\linewidth}
\centering
\includegraphics[width=\linewidth]{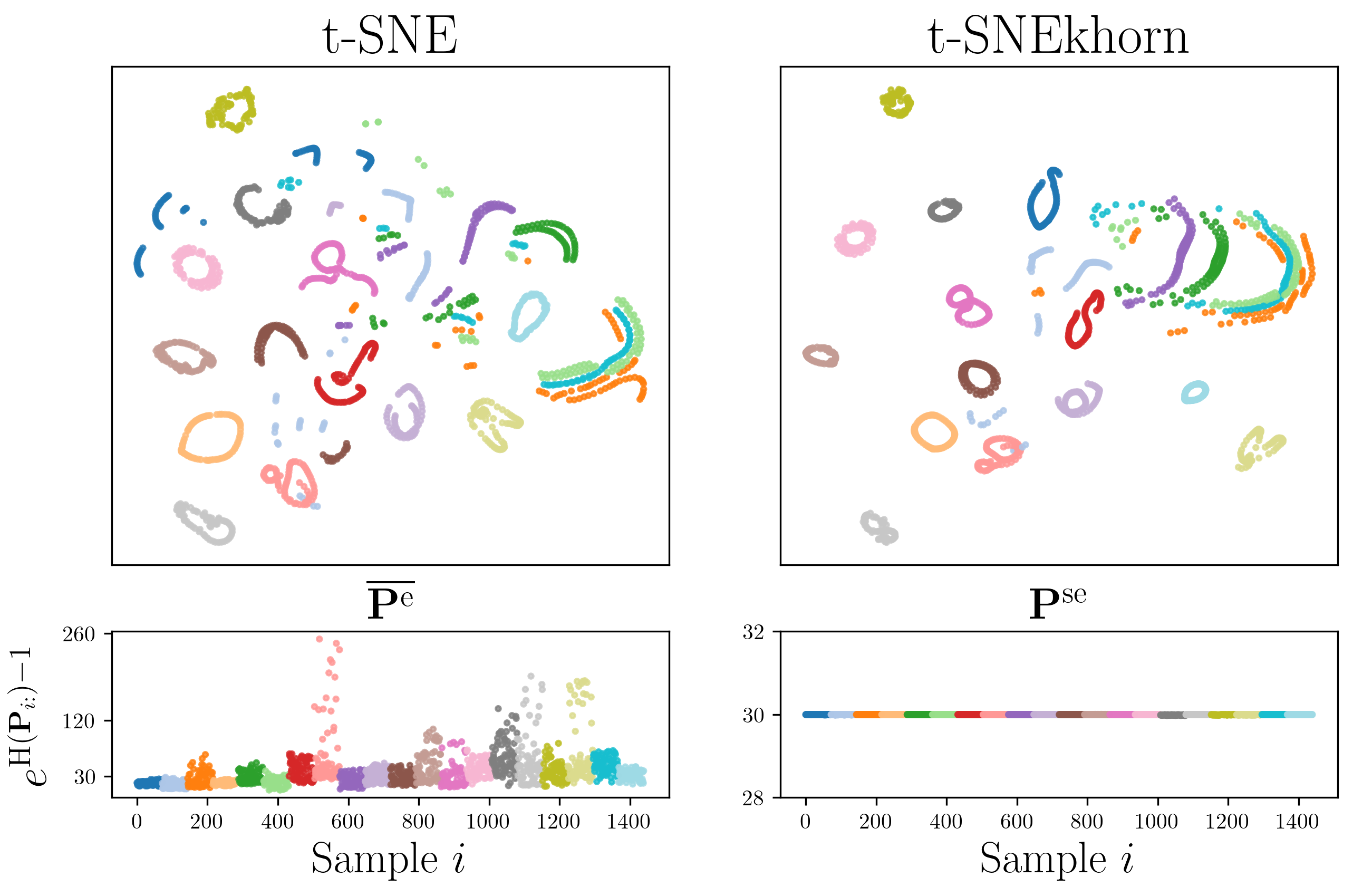}
\captionof{figure}{Top: COIL \cite{nene1996columbia} embeddings with silhouette scores produced by t-SNE and t-SNEkhorn (our method introduced in  \cref{sec:DR_with_OT}) for $\xi=30$. Bottom: $e^{\operatorname{H}(\Pb_{i:})-1}$ (\emph{perplexity}) for each point $i$. \label{fig:coil}}
\end{minipage}

\textbf{Symmetric Entropy-Constrained Optimal Transport.} Entropy-regularized
OT \cite{peyre2019computational} and its connection to affinity
matrices are crucial components in our solution. In the special case of uniform
marginals, and for $\nu > 0$, entropic OT computes the minimum of $\Pb
\mapsto \langle \Pb, \C \rangle -\nu \sum_{i} \operatorname{H}(\Pb_{i:})$
over the space of doubly stochastic matrices $\{\Pb \in \R_{+}^{n \times n} :
\Pb \bm{1} = \Pb^\top \bm{1} =\bm{1}\}$. The optimal solution is the
\emph{unique} doubly stochastic matrix $\Pb^{\mathrm{ds}}$ of the form $\Pb^{\mathrm{ds}}=\diag(\mathbf{u})\K
\diag(\mathbf{v})$ where $\K = \exp(-\C/\nu)$ is the Gibbs energy derived from
$\C$ and $\mathbf{u}, \mathbf{v}$ are positive vectors that can be found with
the celebrated Sinkhorn-Knopp’s algorithm \cite{cuturi2013sinkhorn,
sinkhorn1964relationship}. Interestingly, when the cost $\C$ is \emph{symmetric}
(\eg $\C \in \mathcal{D}$) we can take $\mathbf{u} = \mathbf{v}$ \cite[Section
5.2]{idel2016review} so that the unique optimal solution is itself symmetric and writes  
\begin{align}\label{eq:plan_sym_sinkhorn}
    \Pb^{\mathrm{ds}} = \exp \left((\mathbf{f} \oplus \mathbf{f} - \C) / \nu \right) \text{ where } \mathbf{f} \in \R^n \,.
\tag{DS}
\end{align}
In this case, by relying on convex duality as detailed in Appendix \ref{sec:proof_sinkhorn}, an equivalent formulation for the symmetric entropic OT problem is
\begin{equation}
\tag{EOT}
\label{eq:entropy_constrained_OT}
\min_{\Pb \in \R_{+}^{n \times n}} \quad \langle \Pb, \C \rangle \quad \text{s.t.} \quad \Pb \bm{1} = \bm{1}, \: \Pb = \Pb^\top \text{ and } \sum_i \operatorname{H}(\Pb_{i:}) \geq \eta \:,
\end{equation}
where $0 \leq \eta \leq n (\log n + 1)$ is a constraint on the global entropy $\sum_i \operatorname{H}(\Pb_{i:})$ of the OT plan $\Pb$ which happens to be saturated at optimum (Appendix \ref{sec:proof_sinkhorn}). This constrained formulation of symmetric entropic OT will provide new insights into entropic affinities, as detailed in the next sections.


\section{Symmetric Entropic Affinities}\label{sec:sym_entropic_affinity}

In this section, we present our first major contribution: symmetric entropic affinities. We begin by providing a new perspective on EAs through the introduction of an equivalent convex problem.

\subsection{Entropic Affinities as Entropic Optimal Transport}\label{sec:entropic_affinity_semi_relaxed}

We introduce the following set of matrices with row-wise stochasticity and entropy constraints:
\begin{align}
  \mathcal{H}_\xi \coloneqq \{\Pb \in \mathbb{R}_+^{n \times n} \ \text{s.t.} \ \Pb \bm{1} = \bm{1} \: \ \text{and} \  \forall i, \: \operatorname{H}(\Pb_{i:}) \geq \log{\xi} + 1 \}\:.
\end{align}
This space is convex since $\mathbf{p} \in \R_+^n \mapsto \operatorname{H}(\mathbf{p})$ is concave, thus its superlevel set is convex. In contrast to the entropic constraints utilized in standard entropic optimal transport which set a lower-bound on the \emph{global} entropy, as demonstrated in the formulation \eqref{eq:entropy_constrained_OT}, $\mathcal{H}_\xi$ imposes a constraint on the entropy of \emph{each row} of the matrix $\Pb$.
Our first contribution is to prove that EAs can be computed by solving a specific problem involving $\mathcal{H}_\xi$ (see Appendix \ref{sec:proofs} for the proof).
\begin{restatable}{proposition}{entropicaffinityaslinearprogram}
\label{prop:entropic_affinity_as_linear_program}
Let $\C \in \R^{n \times n}$ without constant rows. Then $\Pb^{\mathrm{e}}$ solves the entropic affinity problem (\ref{eq:entropic_affinity_pb}) with cost $\C$ if and only if $\Pb^{\mathrm{e}}$ is the unique solution of the convex problem
\begin{equation}\label{eq:entropic_affinity_semi_relaxed}
    \min_{\Pb \in \mathcal{H}_{\xi}} \: \langle \Pb, \C \rangle .
    \tag{EA as OT}
\end{equation}
\end{restatable}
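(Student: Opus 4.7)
My plan rests on the observation that both the objective $\langle \Pb, \C \rangle = \sum_i \langle \Pb_{i:}, \C_{i:} \rangle$ and the feasible set $\mathcal{H}_\xi$ (stochasticity $\Pb\bm{1} = \bm{1}$ and entropy bound $\operatorname{H}(\Pb_{i:}) \geq \log\xi + 1$, both being row-wise) decouple across the rows of $\Pb$. So (\ref{eq:entropic_affinity_semi_relaxed}) splits into $n$ independent convex subproblems
\begin{equation*}
(R_i)\quad \min_{\p \in \R_+^n} \langle \p, \C_{i:} \rangle \quad \text{s.t.} \quad \bm{1}^\top \p = 1,\ \operatorname{H}(\p) \geq \log \xi + 1.
\end{equation*}
Each $(R_i)$ has a linear objective and a convex feasible set (by concavity of $\operatorname{H}$). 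Slater's condition holds, since the uniform distribution $\bm{1}/n$ is strictly feasible (its entropy is $\log n + 1 > \log \xi + 1$ when $\xi \leq n-1$), so the KKT conditions are necessary and sufficient for optimality.

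Introducing multipliers $\lambda_i \in \R$ for normalization and $\varepsilon_i \geq 0$ for the entropy constraint and using $\partial \operatorname{H}/\partial p_j = -\log p_j$, stationarity reads $C_{ij} - \lambda_i + \varepsilon_i \log p_j^\star = 0$, yielding the Gibbs form $p_j^\star = \exp((\lambda_i - C_{ij})/\varepsilon_i)$. The next step is to show $\varepsilon_i > 0$ at optimum, which by complementary slackness forces the entropy constraint to be active, $\operatorname{H}(\p^\star) = \log\xi + 1$. Intuitively the linear cost is minimized by concentrating mass on the smallest $C_{ij}$, which \emph{decreases} entropy, so the bound must bind. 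Formally, if $\operatorname{H}(\p^\star) > \log\xi + 1$, the constraint is slack in a neighborhood of $\p^\star$; since $\C_{i:}$ is non-constant there is a direction $\delta$ with $\bm{1}^\top \delta = 0$ and $\langle \C_{i:}, \delta\rangle < 0$, and because the Gibbs form has full support the non-negativity constraints are non-binding, contradicting optimality. Combining the Gibbs form with $\bm{1}^\top \p^\star = 1$ and $\operatorname{H}(\p^\star) = \log\xi + 1$ recovers precisely the defining equations of $\Pb^{\mathrm{e}}_{i:}$ in (\ref{eq:entropic_affinity_pb}), with $\varepsilon_i = \varepsilon_i^\star$.

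For uniqueness, I would note that for $\C_{i:}$ non-constant the map $\varepsilon \mapsto \operatorname{H}(\p(\varepsilon))$ with $p_j(\varepsilon) \propto \exp(-C_{ij}/\varepsilon)$ is strictly increasing on $(0, +\infty)$: a direct computation yields $\partial_\varepsilon \operatorname{H}(\p(\varepsilon)) = \varepsilon^{-3}\operatorname{Var}_{\p(\varepsilon)}(\C_{i:})$, strictly positive whenever the row is non-constant. Hence the bandwidth $\varepsilon_i^\star$ realizing entropy $\log\xi + 1$ is unique, and so is the minimizer $\p^\star$ of $(R_i)$; reassembling rows yields the unique minimizer $\Pb^{\mathrm{e}}$ of (\ref{eq:entropic_affinity_semi_relaxed}), which coincides with the EA of (\ref{eq:entropic_affinity_pb}). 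I expect the trickiest part to be the rigorous argument that the entropy constraint is active (the second step), where the non-constant rows hypothesis is essential to rule out degenerate scenarios such as minimizers living on the relative boundary of the simplex; the variance identity above is what ultimately ties the binding-constraint argument and the uniqueness argument together.
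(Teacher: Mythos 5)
Your proposal is correct and follows essentially the same route as the paper's proof: a KKT analysis of the convex program (Slater's condition, the Gibbs form from stationarity, the non-constant-row hypothesis forcing the entropy multipliers $\varepsilon_i$ to be positive, and complementary slackness giving saturation of the entropy constraints). The explicit row-by-row decoupling and the variance identity for uniqueness of the bandwidth are presentational refinements of arguments the paper makes, or leaves implicit, at the level of the full matrix.
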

Interestingly, this result shows that EAs boil down to minimizing a transport objective with cost $\C$ and row-wise entropy constraints $\mathcal{H}_{\xi}$ where
$\xi$ is the desired perplexity. As such, \eqref{eq:entropic_affinity_semi_relaxed} can be seen as a specific \emph{semi-relaxed} OT problem \cite{rabin2014adaptive,flamary2016optimal} (\ie without the second constraint on the marginal $\Pb^\top \bm{1}= \bm{1}$) but with entropic constraints on the rows of $\Pb$. We also show that the optimal solution $\Pb^\star$ of \eqref{eq:entropic_affinity_semi_relaxed} has \emph{saturated entropy} \ie $\forall i, \: \operatorname{H}(\Pb^\star_{i:}) = \log{\xi} + 1$. In other words, relaxing the equality constraint in \eqref{eq:entropic_affinity_pb} as an inequality constraint in $\Pb \in \mathcal{H}_{\xi}$ does not affect the solution while it allows reformulating entropic affinity as a convex optimization problem. 
To the best of our knowledge, this connection between OT and entropic affinities is novel and is an essential key to the method proposed in the
next section.

\begin{remark}
  The kernel bandwidth parameter $\bm{\varepsilon}$ from the original formulation of entropic affinities (\ref{eq:entropic_affinity_pb}) is the Lagrange dual variable associated with the entropy constraint in (\ref{eq:entropic_affinity_semi_relaxed}). Hence computing $\bm{\varepsilon}^\star$ in (\ref{eq:entropic_affinity_pb}) exactly corresponds to solving the dual problem of (\ref{eq:entropic_affinity_semi_relaxed}).
\end{remark}

\begin{remark}\label{Pe_proj}
  Let $\K_{\sigma} = \exp(-\C/\sigma)$. As shown in \cref{sec:proj_KL}, if $\bm{\varepsilon}^\star$ solves (\ref{eq:entropic_affinity_pb}) and $\sigma \leq \min(\bm{\varepsilon}^\star)$, then $\Pb^{\mathrm{e}} = \operatorname{Proj}^{\operatorname{\KL}}_{\mathcal{H}_{\xi}}(\K_{\sigma})  =  \argmin_{\Pb \in \mathcal{H}_{\xi}} \KL(\Pb | \K_\sigma)$.
  Therefore $\Pb^{\mathrm{e}}$ can be seen as a $\KL$ Bregman projection \cite{benamou2015iterative} of a Gaussian kernel onto $\mathcal{H}_{\xi}$. Hence the input matrix in \eqref{symmetrization_tsne}  is $\overline{\Pb^{\mathrm{e}}}= \operatorname{Proj}^{\ell_2}_{\mathcal{S}}(\operatorname{Proj}^{\operatorname{\KL}}_{\mathcal{H}_{\xi}}(\K_\sigma))$ which corresponds to a surprising mixture of $\KL$ and orthogonal projections.
\end{remark}

\subsection{Symmetric Entropic Affinity Formulation}\label{subsec:sea}

\begin{figure}[t]
  \begin{center}
  \centerline{\includegraphics[width=\columnwidth]{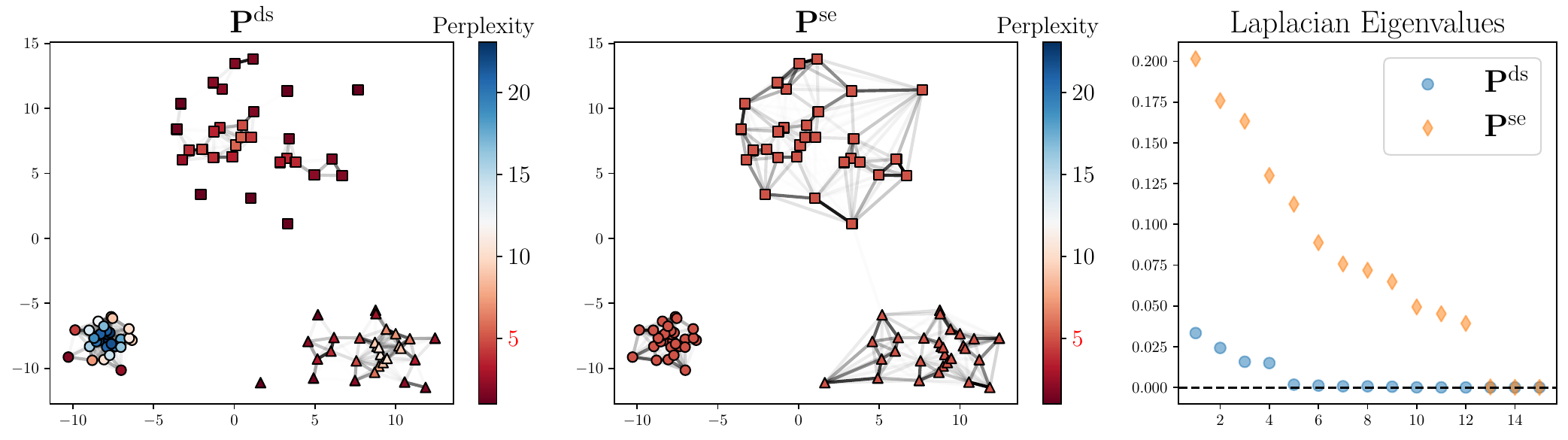}}
  \caption{Samples from a mixture of three Gaussians with varying standard deviations. The edges' strength is proportional to the weights in the affinities $\Pb^{\mathrm{ds}}$ \eqref{eq:plan_sym_sinkhorn} and $\Pb^{\mathrm{se}}$ \eqref{eq:sym_entropic_affinity} computed with $\xi=5$ (for $\Pb^{\mathrm{ds}}$, $\xi$ is the average perplexity such that $\sum_i \operatorname{H}(\Pb^{\mathrm{ds}}_{i:})=\sum_i \operatorname{H}(\Pb^{\mathrm{se}}_{i:})$). Points' color represents the perplexity $e^{\operatorname{H}(\Pb_{i:})-1}$. Right plot: smallest eigenvalues of the Laplacian for the two affinities.}
  \label{fig:Ps_vs_Pse}
  \end{center}
  \vspace{-0.6cm}
\end{figure}

Based on the previous formulation we now propose symmetric entropic affinities: a symmetric version of EAs that enables keeping the entropy associated with each row (or equivalently column) to the desired value of $\log \xi + 1$ while producing a symmetric doubly stochastic affinity matrix. Our strategy is to enforce symmetry through an additional constraint in (\ref{eq:entropic_affinity_semi_relaxed}), in a similar fashion as \eqref{eq:entropy_constrained_OT}. More precisely we consider the convex optimization problem
\begin{align}
\label{eq:sym_entropic_affinity}
\tag{SEA}
  \min_{\Pb \in \mathcal{H}_{\xi} \cap \mathcal{S}} \: \langle \Pb, \C \rangle \:. 
\end{align}
where we recall that $\mathcal{S}$ is the set of $n \times n$ symmetric matrices. Note that for any $\xi \leq n-1$, $\frac{1}{n} \bm{1}\bm{1}^\top \in \mathcal{H}_{\xi} \cap \mathcal{S}$ hence the set $\mathcal{H}_{\xi} \cap \mathcal{S}$ is a non-empty and convex set. We first detail some important properties of problem \eqref{eq:sym_entropic_affinity} (the proofs of the following results can be found in Appendix \ref{proof:main_props}).
\begin{restatable}[Saturation of the entropies]{proposition}{saturation}
\label{prop:saturation_entropies}
Let $\C \in \mathcal{S}$ with zero diagonal, then \eqref{eq:sym_entropic_affinity} with cost $\C$ has a \emph{unique solution} that we denote by $\Pb^{\mathrm{se}}$. If moreover $\C \in \mathcal{D}$, then for at least $n-1$ indices $i \in \integ{n}$ the solution satisfies $\operatorname{H}(\Pb^{\mathrm{se}}_{i:}) = \log \xi + 1$.
\end{restatable}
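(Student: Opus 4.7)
My plan is to establish existence by compactness, prove saturation of at least $n-1$ row entropies (which I expect to be the technical heart), and then deduce uniqueness from saturation combined with strict concavity of the Shannon entropy.

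First, existence is routine: the feasible set $\mathcal{H}_\xi\cap\mathcal{S}$ contains $\tfrac{1}{n}\bm{1}\bm{1}^\top$ (whose row entropy $\log n+1$ dominates $\log\xi+1$ since $\xi\leq n-1$), is closed (as an intersection of closed sets, $\operatorname{H}$ being continuous), and is bounded (entries in $[0,1]$ by non-negativity and row-stochasticity), hence compact. Since $\Pb\mapsto\langle\Pb,\C\rangle$ is continuous, Weierstrass yields a minimizer, which I call $\Pb^*$.

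For the saturation statement (assuming $\C\in\mathcal{D}$), I proceed by contradiction: suppose that two distinct rows $i_0\neq i_1$ are strictly unsaturated, i.e., $\operatorname{H}(\Pb^*_{i_0:}),\operatorname{H}(\Pb^*_{i_1:})>\log\xi+1$. I aim to exhibit a symmetric direction $\D$ with $\D\bm{1}=0$, $\Pb^*+s\D\geq 0$ for small $s>0$, and $\langle\D,\C\rangle<0$, contradicting the optimality of $\Pb^*$. The clean subcase $P^*_{i_0 i_1}>0$ is handled by the four-point \emph{diagonal swap} $D_{i_0 i_0}=D_{i_1 i_1}=+1$, $D_{i_0 i_1}=D_{i_1 i_0}=-1$, zero elsewhere: it preserves symmetry and row sums, leaves every other row untouched (so all other entropy constraints remain satisfied), keeps the two unsaturated entropies above $\log\xi+1$ by continuity, and yields $\langle\D,\C\rangle=-2C_{i_0 i_1}<0$ because $C_{i_0 i_1}>0$ by $\C\in\mathcal{D}$. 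The case $P^*_{i_0 i_1}=0$ is where I expect the main difficulty: a richer perturbation $D_{i_0 j}=-D_{i_1 j}=a_j$ for $j\neq i_0,i_1$ (which preserves each row $j$'s unit sum automatically), balance-adjusted by the diagonal entries $D_{i_0 i_0},D_{i_1 i_1}$, changes the entropy of row $j$ by $\sim a_j\log(P^*_{j i_1}/P^*_{j i_0})$ at first order and the objective by $2\sum_j a_j(C_{i_0 j}-C_{i_1 j})$, so choosing a suitable sign pattern of $(a_j)$ using the positivity of off-diagonal costs should yield descent; a complementary KKT analysis at the $(i_0,i_1)$ and diagonal entries (which forces $\mu_{i_0 i_1}+\mu_{i_1 i_0}=2C_{i_0 i_1}-\gamma_{i_0}-\gamma_{i_1}>0$ when $\lambda_{i_0}=\lambda_{i_1}=0$) rules out the remaining degenerate configurations.

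For uniqueness, let $\Pb_1\neq\Pb_2$ be two minimizers. Symmetry and the row-sum constraint preclude a single-row difference: if only row $i_0$ differed, then $\Pb_k=\Pb_k^\top$ would force the off-diagonal entries of row $i_0$ to coincide across $\Pb_1$ and $\Pb_2$ (via the other, identical rows), and $\Pb_k\bm{1}=\bm{1}$ would force the diagonal entry $(i_0,i_0)$ to coincide as well, a contradiction. Hence at least two rows differ. The midpoint $\Pb=\tfrac{1}{2}(\Pb_1+\Pb_2)\in\mathcal{H}_\xi\cap\mathcal{S}$ is also optimal; by strict concavity of $\operatorname{H}$, every row $i$ where $\Pb_1$ and $\Pb_2$ disagree satisfies $\operatorname{H}(\Pb_{i:})>\tfrac{1}{2}(\operatorname{H}(\Pb_{1,i:})+\operatorname{H}(\Pb_{2,i:}))\geq\log\xi+1$. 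Thus $\Pb$ would have at least two strictly unsaturated entropies, contradicting the saturation statement just proved. Therefore $\Pb_1=\Pb_2$.
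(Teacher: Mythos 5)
Your existence argument, the ``diagonal swap'' perturbation for the generic case $P^*_{i_0 i_1}>0$, and the observation that two symmetric row-stochastic matrices cannot differ in a single row are all correct. But there are two genuine gaps. First, the degenerate case $P^*_{i_0 i_1}=0$ of the saturation argument is not actually proved: your perturbation $D_{i_0 j}=D_{j i_0}=-D_{i_1 j}=-D_{j i_1}=a_j$ must simultaneously make $\langle \D,\C\rangle=2\sum_j a_j(C_{i_0 j}-C_{i_1 j})<0$, keep every \emph{saturated} row $j$ feasible (the first-order entropy change $a_j\log(P^*_{j i_1}/P^*_{j i_0})$ has a sign unrelated to that of $C_{i_0 j}-C_{i_1 j}$), and respect non-negativity wherever $P^*_{j i_0}$ or $P^*_{j i_1}$ vanishes; you give no argument that a sign pattern $(a_j)$ meeting all three requirements exists, and the appeal to ``a complementary KKT analysis'' is asserted rather than carried out. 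The paper sidesteps this case entirely by working in the dual: from the stationarity condition $\C+\diag(\gammab^\star)\log\Pb^\star-\lambdab^\star\bm{1}^\top+\bm{\Gamma}^\star-\bm{\Gamma}^{\star\top}=\bm{0}$, any index $\ell$ with $\gamma^\star_\ell=0$ forces $\lambda^\star_\ell=0$ (using $C_{\ell\ell}=0$) and then $C_{\ell\ell'}=-C_{\ell'\ell}=0$ for any two such indices, so $\C\in\mathcal{D}$ leaves at most one such index and complementary slackness yields the $n-1$ saturations.

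Second, you derive uniqueness \emph{from} saturation, so your argument only covers $\C\in\mathcal{D}$, whereas the statement asserts uniqueness for any $\C\in\mathcal{S}$ with zero diagonal --- a strictly weaker hypothesis under which off-diagonal costs may vanish and your inequality $C_{i_0 i_1}>0$ is unavailable. The paper proves uniqueness first and independently, in this larger generality: the midpoint $\Qb$ of two distinct solutions satisfies the entropy constraints strictly, and adding a small multiple of the graph Laplacian $\Hb$ of $\C$ (which satisfies $\Hb=\Hb^\top$, $\Hb\bm{1}=\bm{0}$ and $\langle\Hb,\C\rangle=-\sum_{i\neq j}C_{ij}^2<0$) produces a feasible point with strictly smaller cost, a contradiction. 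To repair your write-up you would need either to supply such a direct descent direction valid for all of $\mathcal{S}$ with zero diagonal, or to weaken the uniqueness claim to $\C\in\mathcal{D}$; and in either case the $P^*_{i_0i_1}=0$ branch of the saturation proof must be completed, most easily by switching to the paper's KKT argument.
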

In other words, the unique solution $\Pb^{\mathrm{se}}$ has at least $n-1$ saturated entropies \ie the corresponding $n-1$ points have exactly a perplexity of $\xi$. In practice, with the algorithmic solution detailed below, we have observed that all $n$ entropies are saturated. Therefore, we believe that this proposition can be extended with a few more assumptions on $\C$. Accordingly,
problem \eqref{eq:sym_entropic_affinity} allows accurate control over the point-wise entropies while providing a symmetric doubly stochastic matrix, unlike $\overline{\Pb^{\mathrm{e}}}$ defined in
\eqref{symmetrization_tsne}, as summarized in \cref{recap_properties_sym}. In the sequel, we denote by $\operatorname{H}_{\mathrm{r}}(\Pb) = \left( \operatorname{H}(\Pb_{i:}) \right)_{i}$ the vector of row-wise entropies of $\Pb$. We rely on the following result to compute $\Pb^{\mathrm{se}}$.
\begin{restatable}[Solving for \ref{eq:sym_entropic_affinity}]{proposition}{solvingsea}
\label{prop:sol_gamma_non_null}
Let $\C \in \mathcal{D}, \mathcal{L}(\Pb, \bm{\gamma}, \bm{\lambda})= \langle \Pb, \C \rangle + \langle \gammab, (\log{\xi} + 1) \bm{1} - \operatorname{H}_{\mathrm{r}}(\Pb) \rangle + \langle \lambdab, \bm{1} - \Pb \bm{1} \rangle$ and $q(\bm{\gamma}, \bm{\lambda}) = \min_{\Pb \in \R_{+}^{n \times n} \cap \mathcal{S}} \mathcal{L}(\Pb, \bm{\gamma}, \bm{\lambda})$. Strong duality holds for \eqref{eq:sym_entropic_affinity}. Moreover, let
$\bm{\gamma}^\star, \lambdab^\star \in \operatorname{argmax}_{\bm{\gamma} \geq 0, \lambdab} q(\bm{\gamma}, \bm{\lambda})$ be the optimal dual variables respectively associated with the entropy and marginal constraints. Then, for at least $n-1$ indices $i \in \integ{n}, \gamma^{\star}_i > 0$.
When $\forall i \in \integ{n}$, $\gamma^\star_i > 0$ then $\operatorname{H}_{\mathrm{r}}(\Pb^{\mathrm{se}}) = (\log \xi + 1)\bm{1}$ and $\Pb^{\mathrm{se}}$ has the form
\begin{align}
    \Pb^{\mathrm{se}} &= \exp{\left(\left(\lambdab^\star \oplus \lambdab^\star - 2 \C \right) \oslash \left(\bm{\gamma}^\star \oplus \bm{\gamma}^\star \right) \right)} \:.
\end{align}
\end{restatable}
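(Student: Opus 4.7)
The plan is to split the statement into three tasks: verify strong duality, derive the explicit form of $\Pb^{\mathrm{se}}$ from KKT stationarity, and argue that at most one $\gamma_i^\star$ can vanish.

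\emph{Strong duality} is the quickest step. The feasible set $\mathcal{H}_\xi \cap \mathcal{S}$ is cut out by affine equalities ($\Pb \bm{1} = \bm{1}$ and $\Pb = \Pb^\top$) and concave inequalities ($\operatorname{H}(\Pb_{i:}) \geq \log \xi + 1$), and the uniform matrix $\tfrac{1}{n}\bm{1}\bm{1}^\top$ is a Slater point since it attains row-entropy $\log n + 1 > \log \xi + 1$ for any $\xi \leq n-1$. Standard convex duality then yields strong duality together with attainment of the dual maximum.

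\emph{Explicit form.} I would compute stationarity of $\mathcal{L}$ on $\R_+^{n \times n} \cap \mathcal{S}$ by parametrizing $\Pb$ through its upper triangle. Using $\partial \operatorname{H}(\p)/\partial p_k = -\log p_k$ and noting that each entry $P_{ij}$ with $i \neq j$ contributes simultaneously to positions $(i,j)$ and $(j,i)$ of $\langle \Pb, \C \rangle$, to rows $i$ and $j$ of $\operatorname{H}_{\mathrm{r}}$, and to the $i$-th and $j$-th marginals, one obtains
\[
  \partial_{P_{ij}} \mathcal{L} = 2 C_{ij} + (\gamma_i + \gamma_j)\log P_{ij} - \lambda_i - \lambda_j,
\]
while the diagonal case reduces to $\gamma_i \log P_{ii} = \lambda_i$ (this matches the exponential formula at $i=j$ because $C_{ii}=0$ when $\C \in \mathcal{D}$). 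When every $\gamma_i^\star > 0$ the denominators $\gamma_i^\star + \gamma_j^\star$ are strictly positive for all $(i,j)$; solving the stationarity then gives exactly the claimed closed form, and positivity of every entry confirms interior stationarity a posteriori so the nonnegativity multiplier indeed vanishes. Complementary slackness $\gamma_i^\star(\log \xi + 1 - \operatorname{H}(\Pb_{i:}^{\mathrm{se}})) = 0$ combined with $\gamma_i^\star > 0$ for every $i$ then forces $\operatorname{H}_{\mathrm{r}}(\Pb^{\mathrm{se}}) = (\log \xi + 1)\bm{1}$.

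\emph{At least $n-1$ positive multipliers.} This is where I expect the main difficulty to lie. Plain complementary slackness only provides the converse direction (``$\gamma_i^\star > 0 \Rightarrow$ entropy $i$ saturated''), so \cref{prop:saturation_entropies}, which guarantees at least $n-1$ saturated entropies on the primal side, must be promoted into dual positivity by hand. My plan is a contradiction argument: assume two distinct coordinates $\gamma_{i_0}^\star = \gamma_{j_0}^\star = 0$. Finiteness of the inner minimum in $q(\bm{\gamma}^\star, \bm{\lambda}^\star)$ then forces $\lambda_{i_0}^\star + \lambda_{j_0}^\star \leq 2 C_{i_0 j_0}$, and the Lagrangian is linear — hence flat when the inequality is tight — along $P_{i_0 j_0}$. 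Combining this with uniqueness of $\Pb^{\mathrm{se}}$ (\cref{prop:saturation_entropies}), I would exhibit a small feasible dual perturbation of the form $\bm{\gamma}^\star + t\bm{e}_{i_0}$ compensated by a shift in $\bm{\lambda}$ that strictly increases $q$, contradicting optimality of $(\bm{\gamma}^\star, \bm{\lambda}^\star)$. The delicate step is controlling the boundary configurations where $P_{i_0 j_0}^{\mathrm{se}} = 0$, which I would handle by exploiting $\C \in \mathcal{D}$ (zero diagonal, strictly positive off-diagonal costs) in the same spirit as the proof of \cref{prop:saturation_entropies}.
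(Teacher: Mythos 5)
Your strong-duality argument and your derivation of the closed form are correct and essentially match the paper. The paper introduces an explicit multiplier $\bm{\Gamma}$ for the constraint $\Pb=\Pb^\top$ and then eliminates it by combining the $(i,j)$ and $(j,i)$ stationarity equations, which yields exactly your relation $2C_{ij}+(\gamma_i+\gamma_j)\log P_{ij}-\lambda_i-\lambda_j=0$; parametrizing by the upper triangle as you do is an equivalent and arguably cleaner route to the same equation, and your diagonal case and complementary-slackness step are also as in the paper.

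The gap is in the claim that at most one $\gamma^\star_i$ vanishes. Your perturbation argument is only a sketch, and its crucial step --- that $\bm{\gamma}^\star+t\bm{e}_{i_0}$, with a compensating shift of $\bm{\lambda}$, \emph{strictly increases} $q$ --- is not justified: the supergradient of $q$ in $\gamma_{i_0}$ is $(\log\xi+1)-\operatorname{H}(\Pb_{i_0:})$ evaluated at an inner minimizer, whose sign you do not control, and when two coordinates of $\bm{\gamma}$ vanish the inner minimizer is not unique (the Lagrangian becomes linear in $P_{i_0 j_0}$), so even the directional derivative of $q$ needs care. The detour is also unnecessary: the stationarity equations you already wrote down give the conclusion in two lines, and this is precisely the paper's argument. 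If $\gamma^\star_{i_0}=\gamma^\star_{j_0}=0$ with $i_0\neq j_0$, your diagonal equation $\gamma_i\log P_{ii}=\lambda_i$ forces $\lambda^\star_{i_0}=\lambda^\star_{j_0}=0$, and your off-diagonal equation at $(i_0,j_0)$ reduces to $2C_{i_0j_0}=\lambda^\star_{i_0}+\lambda^\star_{j_0}=0$, contradicting $C_{i_0j_0}>0$ for $\C\in\mathcal{D}$. (Both this argument and the paper's use the first-order conditions as equalities, tacitly excluding the boundary case where the relevant entries of $\Pb^{\mathrm{se}}$ vanish and stationarity only gives an inequality; you rightly flag this, but the paper does not treat it either, so closing it is not needed to match the paper's level of rigor.)
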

By defining the symmetric matrix $\Pb(\bm{\gamma}, \bm{\lambda}) = 
\exp{\left(\left(\lambdab \oplus \lambdab - 2 \C \right) \oslash \left(\bm{\gamma} \oplus \bm{\gamma} \right) \right)}$, we prove that, when $\bm{\gamma}>0, \min_{\Pb \in \mathcal{S}} \mathcal{L}(\Pb, \bm{\gamma}, \bm{\lambda})$ has a unique solution given by $\Pb(\bm{\gamma}, \bm{\lambda})$ which implies $q(\gammab, \lambdab) = \mathcal{L}(\Pb(\bm{\gamma}, \bm{\lambda}), \gammab, \lambdab)$. Thus the proposition shows that when $\bm{\gamma}^\star > 0, \ \Pb^{\mathrm{se}} = \Pb(\bm{\gamma}^\star, \bm{\lambda}^\star)$ where $\bm{\gamma}^\star, \bm{\lambda}^\star$ solve the following convex problem (as maximization of a \emph{concave} objective)
\begin{equation}
\label{eq:dual_problem}
\tag{Dual-SEA}
\max_{\bm{\gamma} > 0, \lambdab} \mathcal{L}(\Pb(\bm{\gamma}, \bm{\lambda}), \bm{\gamma}, \bm{\lambda}). 
\end{equation}
\begin{minipage}{0.47\linewidth}
  Consequently, to find $\Pb^{\mathrm{se}}$ we solve the problem \eqref{eq:dual_problem}. Although the form of $\Pb^{\mathrm{se}}$ presented in Proposition \ref{prop:sol_gamma_non_null} is only valid when $\bm{\gamma}^\star$ is positive and we have only proved it for $n-1$ indices, we emphasize that if \eqref{eq:dual_problem} has a finite solution, then it is equal to $\Pb^{\mathrm{se}}$. Indeed in this case the solution satisfies the KKT system associated with \eqref{eq:sym_entropic_affinity}.
\end{minipage}
\hspace{0.005\linewidth}
\begin{minipage}{0.5\linewidth} 
  \vspace{-0.1cm}
  \begin{sc} \setlength\tabcolsep{1mm}
  \captionof{table}{\label{recap_properties_sym} Properties of $\Pb^{\mathrm{e}}$, $\overline{\Pb^{\mathrm{e}}}$, $\Pb^{\mathrm{ds}}$ and $\Pb^{\mathrm{se}}$}
  \vspace{-0.1cm}
  \begin{tabular}{lcccc}
  \toprule[1.5pt]
  Affinity matrix& $\Pb^{\mathrm{e}}$& $\overline{\Pb^{\mathrm{e}}}$
  & $\Pb^{\mathrm{ds}}$& $\Pb^{\mathrm{se}}$ \\
  Reference & \cite{hinton2002stochastic} & \cite{van2008visualizing} & \cite{lu2019doubly} & \eqref{eq:sym_entropic_affinity} \\
  \midrule
  $\Pb=\Pb^\top$ & $\red{\boldsymbol\times}$ & $\green{\checkmark}$ & $\green{\checkmark}$ & $\green{\checkmark}$ \\
  $\Pb \bm{1} = \Pb^\top \bm{1} = \bm{1}$ & $\red{\boldsymbol\times}$ & $\red{\boldsymbol\times}$ & $\green{\checkmark}$ & $\green{\checkmark}$ \\
  $\operatorname{H}_{\mathrm{r}}(\Pb) = (\log \xi + 1)\bm{1}$ & $\green{\checkmark}$ & $\red{\boldsymbol\times}$ & $\red{\boldsymbol\times}$ & $\green{\checkmark}$ \\
  \bottomrule[1.5pt]
  \end{tabular}
  \end{sc}
\end{minipage}

\textbf{Numerical optimization.} The dual problem \eqref{eq:dual_problem} is concave and can be solved with guarantees through a dual ascent approach with closed-form gradients (using \eg \texttt{SGD}, \texttt{BFGS} \cite{liu1989limited} or \texttt{ADAM} \cite{kingma2014adam}).
At each gradient step, one can compute the current estimate $\Pb(\bm{\gamma}, \bm{\lambda})$ while the gradients of the loss \textit{w.r.t.} $\gammab$ and $\lambdab$ are given respectively by the constraints $(\log{\xi}+1)
\bm{1} - \operatorname{H}_{\mathrm{r}}(\Pb(\bm{\gamma}, \bm{\lambda}))$ and $ \bm{1} - \Pb(\bm{\gamma}, \bm{\lambda}) \bm{1}$ (see \eg \cite[Proposition 6.1.1]{bertsekas1997nonlinear}).
Concerning time
complexity, each step can be performed with $\mathcal{O}(n^2)$ algebraic
operations. From a practical perspective, we found that using a change of variable $\gammab \leftarrow \gammab^2$ and optimize $\gammab \in \R^n$ leads to enhanced numerical stability.

\begin{remark}
  In the same spirit as \cref{Pe_proj}, one can express $\Pb^{\mathrm{se}}$ as a $\KL$ projection of $\K_{\sigma} = \exp(-\C/\sigma)$.
  Indeed, we show in \cref{sec:proj_KL} that if $0 < \sigma \leq \min_i \gamma^\star_i$, then $\Pb^{\mathrm{se}} = \operatorname{Proj}^{\operatorname{\KL}}_{\mathcal{H}_{\xi} \cap
  \mathcal{S}}(\K_{\sigma})$. 
          \end{remark}

\begin{wrapfigure}[17]{h}{4.5cm}
  \vspace{0.2cm}
  \centerline{\includegraphics[width=\linewidth]{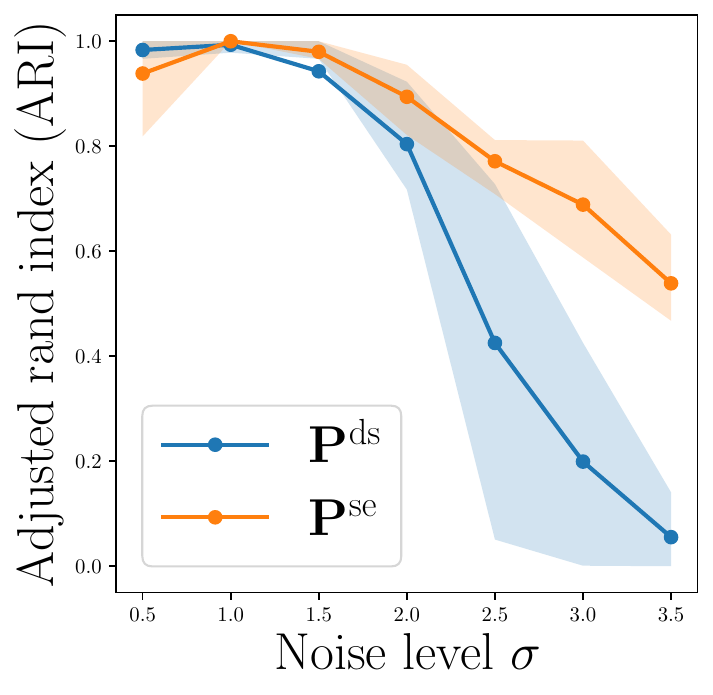}}
  \captionof{figure}{ARI spectral clustering on the example of three Gaussian clusters with variances: $\sigma^2$, $2\sigma^2$ and $3 \sigma^2$ (as in \cref{fig:Ps_vs_Pse}).}
  \label{fig:varying_sig_pds_pse}
  \end{wrapfigure}
\textbf{Comparison between $\Pb^{\mathrm{ds}}$ and $\Pb^{\mathrm{se}}$.} In \cref{fig:Ps_vs_Pse} we illustrate the ability of our proposed affinity $\Pb^{\mathrm{se}}$ to adapt to varying noise levels. In the OT problem that we consider, each sample is given a mass of one that is distributed over its neighbors (including itself since self-loops are allowed). For each sample, we refer to the entropy of the distribution over its neighbors as the \emph{spreading} of its mass. One can notice that for $\Pb^{\mathrm{ds}}$ \eqref{eq:plan_sym_sinkhorn} 
(OT problem with global entropy constraint \eqref{eq:entropy_constrained_OT})
, the samples do not spread their mass evenly depending on the density around them. On the contrary, the per-row entropy constraints of $\Pb^{\mathrm{se}}$ force equal spreading among samples.
This can have benefits, particularly for clustering, as illustrated in the rightmost plot, which shows the eigenvalues of the associated Laplacian matrices (recall that the number of connected components equals the dimension of the null space of its Laplacian \cite{chung1997spectral}). As can be seen, $\Pb^{\mathrm{ds}}$ results in many unwanted clusters, unlike $\Pb^{\mathrm{se}}$, which is robust to varying noise levels (its Laplacian matrix has only $3$ vanishing eigenvalues). We further illustrate this phenomenon on \cref{fig:varying_sig_pds_pse} with varying noise levels.

\section{Optimal Transport for Dimension Reduction with SNEkhorn}\label{sec:DR_with_OT}

In this section, we build upon symmetric entropic affinities to introduce SNEkhorn, a new DR algorithm that fully benefits from the advantages of doubly stochastic affinities.

\textbf{SNEkhorn's objective.} Our proposed method relies on doubly stochastic affinity matrices to capture the dependencies among the samples in both input \emph{and} latent spaces. The $\KL$ divergence, which is the central criterion in most popular DR methods \cite{van2022probabilistic}, is used to measure the discrepancy between the two affinities. As detailed in sections \ref{sec:background} and \ref{sec:sym_entropic_affinity}, $\Pb^{\mathrm{se}}$ computed using the cost $[\C_\X]_{ij} = \|\X_{i:}-\X_{j:}\|_2^2$, corrects for heterogeneity in the
input data density by imposing point-wise entropy constraints. As we do not need such correction for embedding coordinates $\Z$ since they must be optimized, we opt for the standard affinity \eqref{eq:plan_sym_sinkhorn} built as an OT transport plan with global entropy constraint \eqref{eq:entropy_constrained_OT}. This OT plan can be efficiently computed using Sinkhorn's algorithm. More precisely, 
we propose the optimization problem
\begin{align}\label{coupling_pb}
    \min_{\Z \in \mathbb{R}^{n \times q}} \:  \mathrm{KL}\big(\Pb^{\mathrm{se}} | \mathbf{Q}^{\mathrm{ds}}_\Z \big)\,,
\tag{SNEkhorn}
\end{align}
where $\mathbf{Q}^{\mathrm{ds}}_\Z = \exp \left(\mathbf{f}_\Z \oplus \mathbf{f}_\Z - \C_\Z \right)$ stands for the $\eqref{eq:plan_sym_sinkhorn}$ affinity computed with cost $[\C_\Z]_{ij} = \|\Z_{i:}-\Z_{j:}\|_2^2$ and $\mathbf{f}_\Z$ is the optimal dual variable found by Sinkhorn's algorithm.
We set the bandwidth to $\nu = 1$ in $\Qb^{\mathrm{ds}}_\Z$ similarly to \cite{van2008visualizing} as the bandwidth in the low dimensional space only affects the scales of the embeddings and not their shape.
Keeping only the terms that depend on $\Z$ and relying on the double stochasticity of $\Pb^{\mathrm{se}}$, the objective in (\ref{coupling_pb}) can be expressed as $\langle \Pb^{\mathrm{se}}, \C_\Z \rangle - 2 \langle \mathbf{f}_\Z, \bm{1} \rangle$. 

\textbf{Heavy-tailed kernel in latent space.}
Since it is well known that heavy-tailed kernels can be beneficial in DR
\cite{kobak2020heavy}, we propose an extension called t-SNEkhorn that
simply amounts to computing a doubly stochastic student-t kernel 
in the low-dimensional space. With our construction, it corresponds to choosing the cost $[\C_\Z]_{ij} = \log(1 + \|\Z_{i:}-\Z_{j:}\|_2^2)$
instead of $\|\Z_{i:}-\Z_{j:}\|_2^2$.

\textbf{Inference.}
This new DR objective involves computing a doubly stochastic normalization for each update of $\Z$. Interestingly, to compute the optimal dual variable $\mathbf{f}_\Z$ in $\Qb^{\mathrm{ds}}_\Z$, we leverage a well-conditioned Sinkhorn fixed point iteration \cite{knight2014symmetry, feydy2019interpolating}, which converges extremely fast in the symmetric setting:
\begin{equation}\label{eq:sinkhorn_iterations_log_sym_accelerated}
    \forall i, \: 
    [\mathbf{f}_\Z]_i \leftarrow \frac{1}{2} \left( [\mathbf{f}_\Z]_i - \log \sum_{k} \exp \big([\mathbf{f}_\Z]_k - [\C_{\Z}]_{ki} \big) \right) \:.
\tag{Sinkhorn}
\end{equation}
\begin{wrapfigure}[15]{}{6cm}
    \vspace{-0.2cm}
        \centerline{\includegraphics[width=\linewidth]{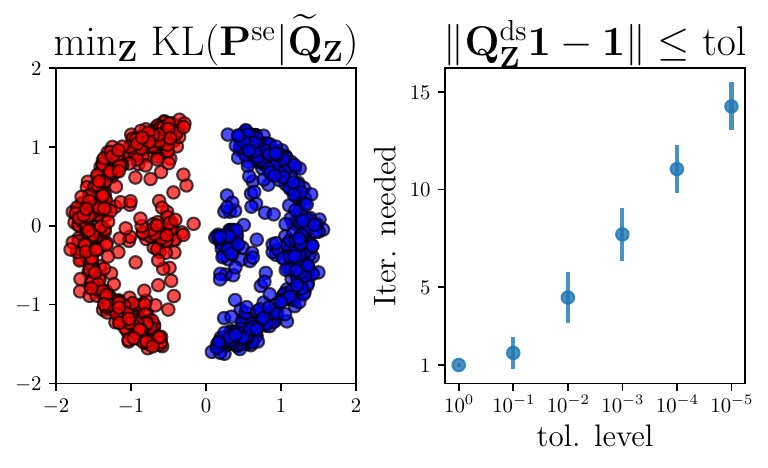}}
        \captionof{figure}{Left: SNEkhorn embedding on the simulated data of \cref{sec:DR_experiments} using $\widetilde{\Qb}_\Z$ instead of $\Qb^{\mathrm{ds}}_\Z$ with $\xi=30$. Right: number of iterations needed to achieve $\| \Qb^{\mathrm{ds}}_\Z \bm{1} - \bm{1} \|_{\infty} \leq \text{tol}$ with \eqref{eq:sinkhorn_iterations_log_sym_accelerated}.}
        \label{fig:snekhorn_not_DS}
\end{wrapfigure}
On the right side of \cref{fig:snekhorn_not_DS}, we plot $\| \Qb^{\mathrm{ds}}_\Z \bm{1} - \bm{1} \|_{\infty}$ as a function of \eqref{eq:sinkhorn_iterations_log_sym_accelerated} iterations for a toy example presented in \cref{sec:DR_experiments}. In most practical cases, we found that about 10 iterations were enough to reach a sufficiently small error. 
$\Z$ is updated through gradient descent with gradients obtained by performing
backpropagation through the Sinkhorn iterations. These
iterations can be further accelerated with a \emph{warm start} strategy by plugging the last $\mathbf{f}_\Z$ to initialize the current one.

\textbf{Related work.}
    Using doubly stochastic affinities for SNE has been proposed in \cite{lu2019doubly}, with two key differences from our work. First, they
    do not consider EAs and resort to $\Pb^{\mathrm{ds}}$ \eqref{eq:plan_sym_sinkhorn}. This affinity, unlike $\Pb^{\mathrm{se}}$, is not adaptive to the data heterogeneous density (as illusrated in \cref{fig:Ps_vs_Pse}). 
    Second, they use the affinity $\widetilde{\Qb}_{\Z}$ in the low-dimensional space and illustrate empirically that matching the latter with a doubly stochastic matrix (\eg $\Pb^{\mathrm{ds}}$ or $\Pb^{\mathrm{se}}$) can sometimes impose spherical constraints on the embedding $\Z$.
    This is detrimental for projections onto a $2D$ flat space (typical use case of DR) where embeddings tend to form circles. This can be verified on the left side of \cref{fig:snekhorn_not_DS}. In contrast, in SNEkhorn, the latent affinity \emph{is also doubly stochastic} so that latent coordinates $\Z$ are not subject to spherical constraints anymore.
    The corresponding SNEkhorn embedding is shown in \cref{fig:simulated_data_multinomial} (bottom right).

\begin{figure}[t]
    \begin{center}
    \includegraphics[width=\linewidth]{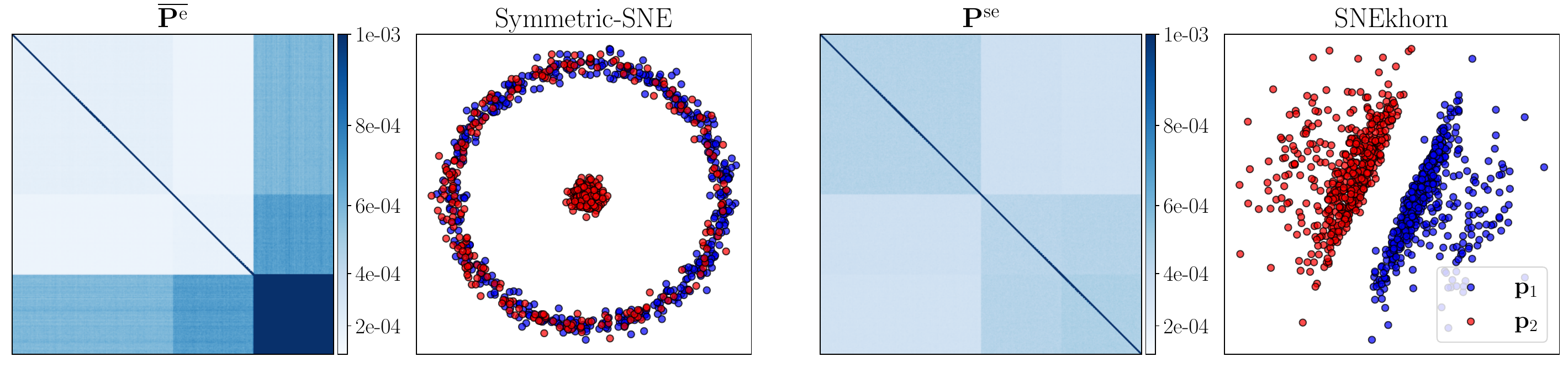}
     \captionof{figure}{From left to right: entries of $\overline{\Pb^{\mathrm{e}}}$ \eqref{symmetrization_tsne} and associated embeddings generated using $\overline{\Pb^{\mathrm{e}}}$. Then $\Pb^{\mathrm{se}}$ \eqref{eq:sym_entropic_affinity} matrix and associated SNEkhorn embeddings. Perplexity $\xi = 30$.}
    \label{fig:simulated_data_multinomial}
    \end{center}
\end{figure}

\section{Numerical experiments}\label{sec:DR_experiments}

This section aims to illustrate the performances of the proposed affinity
matrix $\Pb^{\mathrm{se}}$ \eqref{eq:sym_entropic_affinity} and DR method SNEkhorn at faithfully representing dependencies and
clusters in low dimensions. First, we showcase the relevance of our approach on a simple synthetic dataset with heteroscedastic noise. 
Then, we evaluate the spectral clustering
performances of symmetric entropic affinities before benchmarking
t-SNEkhorn with t-SNE and UMAP \cite{mcinnes2018umap} on real-world images and genomics datasets.\footnote{Our code is available at \texttt{https://github.com/PythonOT/SNEkhorn}.
}

\textbf{Simulated data.}
We consider the toy dataset with heteroscedastic noise from \cite{landa2021doubly}. It consists of sampling uniformly two vectors $\p_1$ and $\p_2$ in the $10^4$-dimensional probability simplex. $n=10^3$ samples are then generated as $\x_i =\tilde{\x}_i / (\sum_j \tilde{x}_{ij})$ where
\begin{align*}
    \tilde{\x}_i \sim 
    \left\{
    \begin{array}{ll}
        \mathcal{M}(1000, \p_1), & 1\leq i \leq 500 \\
        \mathcal{M}(1000, \p_2), & 501\leq i \leq 750 \\
        \mathcal{M}(2000, \p_2), & 751\leq i \leq 1000 \:.
    \end{array}
    \right.
\end{align*}
where $\mathcal{M}$ stands for the multinomial distribution. 
The goal of the task is to test the robustness to heteroscedastic noise. Indeed, points generated using $\p_2$ exhibit different levels of noise due to various numbers of multinomial trials to form an estimation of $\p_2$. This typically occurs in real-world scenarios when the same entity is measured using different experimental setups thus creating heterogeneous technical noise levels (\eg in single-cell sequencing \cite{kobak2019art}). This phenomenon is known as \emph{batch effect} \cite{tran2020benchmark}. In \cref{fig:simulated_data_multinomial}, we show that, unlike $\overline{\Pb^{\mathrm{e}}}$ \eqref{symmetrization_tsne}, $\Pb^{\mathrm{se}}$ \eqref{eq:sym_entropic_affinity} manages to properly filter the noise (top row) to discriminate between samples generated by $\p_1$ and $\p_2$, and represent these two clusters separately in the embedding space (bottom row). In contrast, $\overline{\Pb^{\mathrm{e}}}$ and SNE are misled by the batch effect. This shows that $\overline{\Pb^{\mathrm{e}}}$ doesn't fully benefit from the adaptivity of EAs due to poor normalization and symmetrization. This phenomenon partly explains the superiority of SNEkhorn and t-SNEkhorn over current approaches on real-world datasets as illustrated below. 

\begin{wraptable}[16]{R}{8cm}
    \caption{ARI ($\times 100$) clustering scores on genomics.}
        \begin{small}
    \begin{sc}
    \begin{tabular}{lc@{\hskip 0.1in}c@{\hskip 0.1in}c@{\hskip 0.1in}c@{\hskip 0.1in}c}
    \toprule[1.5pt]
    Data set & $\overline{\Pb^{\mathrm{rs}}}$ & $\Pb^{\mathrm{ds}}$ & $\Pb^{\mathrm{st}}$ & $\overline{\Pb^{\mathrm{e}}}$ & $\Pb^{\mathrm{se}}$ \\
    \midrule
    Liver \tiny{(14520)} & $75.8$ & $75.8$ & $84.9$ & $80.8$ & $\mathbf{85.9}$ \\
    Breast \tiny{(70947)} & $\mathbf{30.0}$ & $\mathbf{30.0}$ & $26.5$ & $23.5$ & $28.5$ \\
    Leukemia \tiny{(28497)} & $43.7$ & $44.1$ & $49.7$ & $42.5$ & $\mathbf{50.6}$ \\
    Colorectal \tiny{(44076)} & $\mathbf{95.9}$ & $\mathbf{95.9}$ & $93.9$ & $\mathbf{95.9}$ & $\mathbf{95.9}$ \\
    Liver \tiny{(76427)} & $76.7$ & $76.7$ & $\mathbf{83.3}$ & $81.1$ & $81.1$ \\
    Breast \tiny{(45827)} & $43.6$ & $53.8$ & $74.7$ & $71.5$ & $\mathbf{77.0}$ \\
    Colorectal \tiny{(21510)} & $57.6$ & $57.6$ & $54.7$ & $\mathbf{94.0}$ & $79.3$ \\
    Renal \tiny{(53757)} & $47.6$ & $47.6$ & $\mathbf{49.5}$ & $\mathbf{49.5}$ & $\mathbf{49.5}$ \\
    Prostate \tiny{(6919)} & $12.0$ & $13.0$ & $13.2$ & $16.3$ & $\mathbf{17.4}$ \\
    Throat \tiny{(42743)} & $9.29$ & $9.29$ & $11.4$ & $11.8$ & $\mathbf{44.2}$ \\
    \midrule[0.2pt]
    scGEM & $57.3$ & $58.5$ & $\mathbf{74.8}$ & $69.9$ & $71.6$ \\
    SNAREseq & $8.89$ & $9.95$ & $46.3$ & $55.4$ & $\mathbf{96.6}$ \\
    \bottomrule[1.5pt]
    \label{table_spectral_microaray}
    \end{tabular}
    \end{sc}
\end{small}
\end{wraptable}
\textbf{Real-world datasets.} We then experiment with various labeled
classification datasets including images and genomic data. For images, we use
COIL 20 \cite{nene1996columbia}, OLIVETTI faces \cite{olivetti}, UMNIST
\cite{graham1998characterising} and CIFAR 10 \cite{krizhevsky2009learning}. For
CIFAR, we experiment with features obtained from the last hidden layer of a
pre-trained ResNet \cite{huyresnet} while for the other three datasets, we take
as input the raw pixel data. Regarding genomics data, we consider the Curated
Microarray Database (CuMiDa) \cite{Feltes2019} made of microarray datasets for
various types of cancer, as well as the pre-processed SNAREseq (chromatin
accessibility) and scGEM (gene expression) datasets used in
\cite{SCOT2020}. For CuMiDa, we retain the datasets with most samples. For all
the datasets, when the data dimension exceeds $50$ we apply a pre-processing
step of PCA in dimension $50$, as usually done in practice
\cite{van2008visualizing}. In the following experiments, when not specified the
hyperparameters are set to the value leading to the best average score on five
different seeds with grid-search. For perplexity parameters, we test all
multiples of $10$ in the interval $[10,\min(n,300)]$ where $n$ is the number of
samples in the dataset. We use the same grid for the $k$ of the self-tuning
affinity $\Pb^{\mathrm{st}}$ \cite{zelnik2004self} and for the
\texttt{n\textunderscore neighbors} parameter of UMAP. For scalar bandwidths, we
consider powers of $10$ such that the corresponding affinities' average perplexity belongs to the 
perplexity range. 

\begin{figure*}[t]
    \begin{center}
    \centerline{\includegraphics[width=\columnwidth]{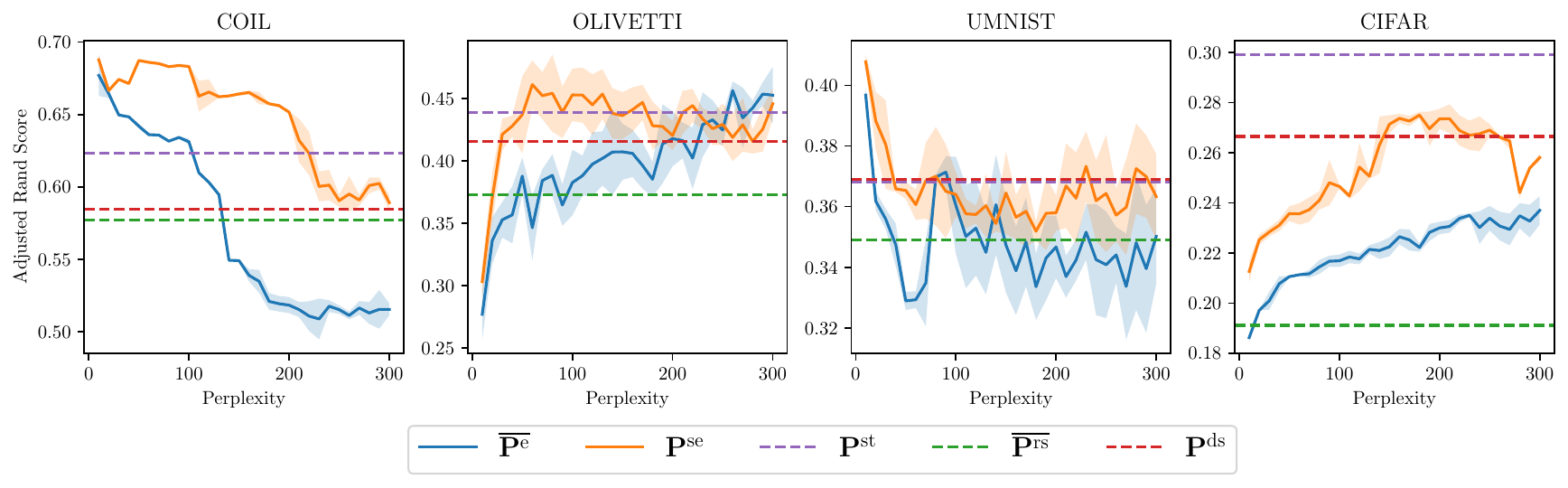}}
    \caption{ARI spectral clustering score as a function of the perplexity parameter for image datasets.}
    \label{fig:spectral_clustering_sensibility}
    \end{center}
\end{figure*}

\textbf{Spectral Clustering.}
Building on the strong connections between spectral clustering mechanisms and t-SNE
\cite{van2022probabilistic,linderman2019clustering} we first consider spectral
clustering tasks to evaluate the affinity matrix $\Pb^{\mathrm{se}}$
\eqref{eq:sym_entropic_affinity} and compare it against
$\overline{\Pb^{\mathrm{e}}}$ \eqref{symmetrization_tsne}. We also consider two
versions of the Gaussian affinity with scalar bandwidth $\K = \exp(-\C/\nu)$:
the symmetrized row-stochastic $\overline{\Pb^{\mathrm{rs}}} =
\operatorname{Proj}^{\ell_2}_{\mathcal{S}}(\Pb^{\mathrm{rs}})$ where
$\Pb^{\mathrm{rs}}$ is $\K$ normalized by row and $\Pb^{\mathrm{ds}}$
\eqref{eq:plan_sym_sinkhorn}. We also consider the adaptive Self-Tuning
$\Pb^{\mathrm{st}}$ affinity from \cite{zelnik2004self} which relies on an
adaptive bandwidth corresponding to the distance from the $k$-th nearest
neighbor of each point. We use the spectral clustering implementation of
\texttt{scikit-learn} \cite{scikit-learn} with default parameters which uses the
unnormalized graph Laplacian. We measure the quality of clustering using the
Adjusted Rand Index (ARI). Looking at both \cref{table_spectral_microaray} and
\cref{fig:spectral_clustering_sensibility}, one can notice that, in general,
symmetric entropic affinities yield better results than usual entropic
affinities with significant improvements in some datasets (\eg throat microarray
and SNAREseq). Overall $\Pb^{\mathrm{se}}$ outperforms all the other affinities
in $8$ out of $12$ datasets. This shows that the adaptivity of EAs is crucial.
\cref{fig:spectral_clustering_sensibility} also shows that this
superiority is verified for the whole range of perplexities. This can be
attributed to the fact that symmetric entropic affinities combine the advantages
of doubly stochastic normalization in terms of clustering and of EAs in terms of
adaptivity. In the next experiment, we show that these advantages translate into
better clustering and neighborhood retrieval at the embedding level when running
SNEkhorn.

\begin{wrapfigure}[12]{R}{0.5\textwidth}
    \centerline{\includegraphics[width=\linewidth]{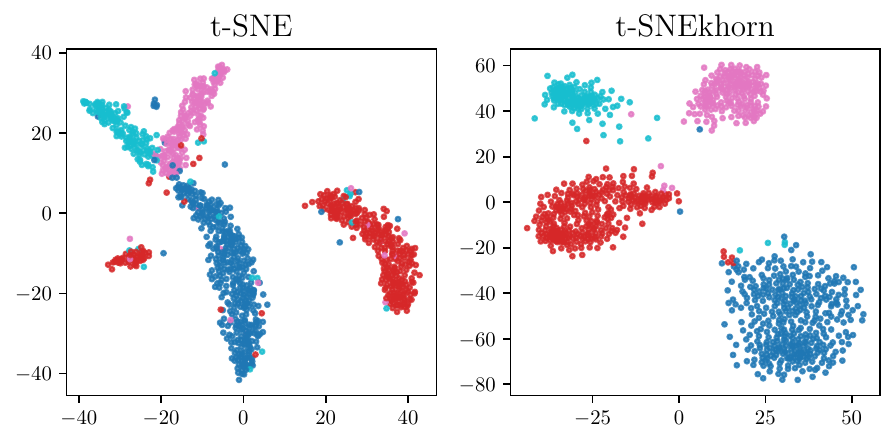}}
    \captionof{figure}{SNAREseq embeddings produced by t-SNE and t-SNEkhorn with $\xi=50$.}
    \label{fig:sc}
\end{wrapfigure}

\begin{table*}[t]
    \centering
    \ra{1.01}
    \caption{Scores for the UMAP, t-SNE and t-SNEkhorn embeddings.}
    \vskip 0.1in
    \begin{small}
    \begin{tabular}{@{\hskip 0.1in}l@{\hskip 0.1in}c@{\hskip 0.1in}c@{\hskip 0.1in}c@{\hskip 0.1in}c@{\hskip 0.1in}c@{\hskip 0.1in}c@{\hskip 0.1in}c@{\hskip 0.1in}c@{\hskip 0.1in}}
        \toprule[1.5pt]
    & \multicolumn{3}{c}{Silhouette ($\times 100$)} & & \multicolumn{3}{c}{Trustworthiness ($\times 100$)} \\
    \cmidrule{2-4} \cmidrule{6-8}
    & UMAP & t-SNE & t-SNEkhorn && UMAP & t-SNE & t-SNEkhorn \\ \midrule
    COIL & $20.4\pm3.3$ & $30.7\pm6.9$ & $\mathbf{52.3\pm1.1}$ && $99.6\pm0.1$ & $99.6\pm0.1$ & $\mathbf{99.9\pm0.1}$ \\ 
    OLIVETTI & $6.4\pm4.2$ & $4.5\pm3.1$ & $\mathbf{15.7\pm2.2}$ && $96.5\pm1.3$ & $96.2\pm0.6$ & $\mathbf{98.0\pm0.4}$ \\
    UMNIST & $-1.4\pm2.7$ & $-0.2\pm1.5$ & $\mathbf{25.4\pm4.9}$ && $93.0\pm0.4$ & $99.6\pm0.2$ & $\mathbf{99.8\pm0.1}$ \\
    CIFAR & $13.6\pm2.4$ & $18.3\pm0.8$ & $\mathbf{31.5\pm1.3}$ && $90.2\pm0.8$ & $90.1\pm0.4$ & $\mathbf{92.4\pm0.3}$ \\ \midrule[0.2pt]
    Liver \tiny{(14520)} & $49.7\pm1.3$ & $50.9\pm0.7$ & $\mathbf{61.1\pm0.3}$ && $89.2\pm0.7$ & $90.4\pm0.4$ & $\mathbf{92.3\pm0.3}$ \\
    Breast \tiny{(70947)} & $28.6\pm0.8$ & $29.0\pm0.2$ & $\mathbf{31.2\pm0.2}$ && $90.9\pm0.5$ & $91.3\pm0.3$ & $\mathbf{93.2\pm0.4}$ \\
    Leukemia \tiny{(28497)} & $22.3\pm0.7$ & $20.6\pm0.7$ & $\mathbf{26.2\pm2.3}$ && $90.4\pm1.1$ & $92.3\pm0.8$ & $\mathbf{94.3\pm0.5}$ \\
    Colorectal \tiny{(44076)} & $67.6\pm2.2$ & $69.5\pm0.5$ & $\mathbf{74.8\pm0.4}$ && $93.2\pm0.7$ & $93.7\pm0.5$ & $\mathbf{94.3\pm0.6}$ \\
    Liver \tiny{(76427)} & $39.4\pm4.3$ & $38.3\pm0.9$ & $\mathbf{51.2\pm2.5}$ && $85.9\pm0.4$ & $89.4\pm1.0$ & $\mathbf{92.0\pm1.0}$ \\
    Breast \tiny{(45827)} & $35.4\pm3.3$ & $39.5\pm1.9$ & $\mathbf{44.4\pm0.5}$ && $93.2\pm0.4$ & $94.3\pm0.2$ & $\mathbf{94.7\pm0.3}$ \\
    Colorectal \tiny{(21510)} & $38.0\pm1.3$ & $\mathbf{42.3\pm0.6}$ & $35.1\pm2.1$ && $85.6\pm0.7$ & $\mathbf{88.3\pm0.9}$ & $88.2\pm0.7$ \\
    Renal \tiny{(53757)} & $44.4\pm1.5$ & $45.9\pm0.3$ & $\mathbf{47.8\pm0.1}$ && $93.9\pm0.2$ & $\mathbf{94.6\pm0.2}$ & $94.0\pm0.2$ \\ 
    Prostate \tiny{(6919)} & $5.4\pm2.7$ & $8.1\pm0.2$ & $\mathbf{9.1\pm0.1}$ && $77.6\pm1.8$ & $\mathbf{80.6\pm0.2}$ & $73.1\pm0.5$ \\ 
    Throat \tiny{(42743)} & $26.7\pm2.4$ & $28.0\pm0.3$ & $\mathbf{32.3\pm0.1}$ && $\mathbf{91.5\pm1.3}$ & $88.6\pm0.8$ & $86.8\pm1.0$ \\ \midrule[0.2pt]
    scGEM & $26.9\pm3.7$ & $33.0\pm1.1$ & $\mathbf{39.3\pm0.7}$ && $95.0\pm1.3$ & $96.2\pm0.6$ & $\mathbf{96.8ß\pm0.3}$ \\
    SNAREseq & $6.8\pm6.0$ & $35.8\pm5.2$ & $\mathbf{67.9ß\pm1.2}$ && $93.1\pm2.8$ & $99.1\pm0.1$ & $\mathbf{99.2\pm0.1}$ \\
    \bottomrule[1.5pt]
    \label{tab:DR_genomics_data}
    \end{tabular}
    \end{small}
    \vspace*{-0.5cm}
\end{table*}

\textbf{Dimension Reduction.}
To guarantee a fair comparison, we implemented not only SNEkhorn, but also t-SNE and UMAP in \texttt{PyTorch} \cite{paszke2017automatic}. 
Note that UMAP also relies on adaptive affinities but sets the degree of each node (related to the hyperparameter \texttt{n\textunderscore neighbors} which plays a similar role to the perplexity) rather than the entropy.
All models were optimized using \texttt{ADAM} \cite{kingma2014adam} with default parameters and the same stopping criterion: the algorithm stops whenever the relative variation of the loss becomes smaller than $10^{-5}$. For each run, we draw independent $\mathcal{N}(0,1)$ coordinates and use this same matrix to initialize all the methods that we wish to compare. To evaluate the embeddings' quality, we make use of the silhouette \cite{rousseeuw1987silhouettes} and trustworthiness \cite{venna2001neighborhood} scores from \texttt{scikit-learn} \cite{scikit-learn} with default parameters.  While the former relies on class labels, the latter measures the agreement between the neighborhoods in input and output spaces, thus giving two complementary metrics to properly evaluate the embeddings. 
The results, presented in \cref{tab:DR_genomics_data}, demonstrate the notable superiority of t-SNEkhorn compared to the commonly used t-SNE and UMAP algorithms. A sensitivity analysis on perplexity can also be found in \cref{sec:sensitivity_dr}. 
Across the $16$ datasets examined, t-SNEkhorn almost consistently outperformed the others, achieving the highest silhouette score on $15$ datasets and the highest trustworthiness score on $12$ datasets. To visually assess the quality of the embeddings, we provide SNAREseq embeddings in \cref{fig:sc}. Notably, one can notice that the use of t-SNEkhorn results in improved class separation compared to t-SNE.    

\section{Conclusion}\label{sec:conclusion}

We have introduced a new principled and efficient method for constructing symmetric entropic affinities. Unlike the current formulation that enforces symmetry through an orthogonal projection, our approach allows control over the entropy in each point thus achieving entropic affinities' primary goal. Additionally, it produces a DS-normalized affinity and thus benefits from the well-known advantages of this normalization. Our affinity takes as input the same perplexity parameter as EAs and can thus be used with little hassle for practitioners. We demonstrate experimentally that both our affinity and DR algorithm (SNEkhorn), leveraging a doubly stochastic kernel in the latent space, achieve substantial improvements over existing approaches.

Note that in the present work, we do not address the issue of large-scale dependencies that are not faithfully represented in the low-dimensional space \cite{van2022probabilistic}. The latter shall be treated in future works. 
Among other promising research directions, one could focus on building multi-scale versions of symmetric entropic affinities \cite{lee2015multi} as well as fast approximations for SNEkhorn forces by adapting \eg Barnes-Hut \cite{van2013barnes} or interpolation-based methods \cite{linderman2019fast} to the doubly stochastic setting. It could also be interesting to use SEAs in order to study the training dynamics of transformers \cite{zhai2023sigmareparam}.

\section*{Acknowledgments} 
The authors are grateful to Mathurin Massias, Jean Feydy and Aurélien Garivier for insightful discussions.
This project was supported in part by the ANR projects AllegroAssai ANR-19-CHIA-0009, SingleStatOmics ANR-18-CE45-0023 and OTTOPIA ANR-20-CHIA-0030. This work was also supported by the ACADEMICS grant of the IDEXLYON, project of the Université de Lyon, PIA operated by ANR-16-IDEX-0005.

\newpage
\appendix

\section{Proofs}\label{sec:proofs}

\subsection{Euclidean Projection onto $\mathcal{S}$}\label{sec:sym_proj}

For the problem $\argmin_{\Pb \in \mathcal{S}} \: \| \Pb - \K \|_2^2$, the Lagrangian takes the form, with $\W \in \R^{n \times n}$,
\begin{equation}
\mathcal{L}(\Pb, \W) = \| \Pb - \K \|_2^2 +\langle \W, \Pb -\Pb^{\top} \rangle \:.
\end{equation}
Cancelling the gradient of $\mathcal{L}$ with respect to $\Pb$ gives $2(\Pb^\star - \K) + \W - \W^\top = \bm{0}$. Thus $\Pb^\star = \K + \frac{1}{2} \left(\W^\top - \W \right)$. Using the symmetry constraint on $\Pb^\star$ yields $\Pb^\star = \frac{1}{2} \left(\K + \K^\top \right)$.
Hence we have:
\begin{equation}
\argmin_{\Pb \in \mathcal{S}} \: \| \Pb -  \K \|_2^2 = \frac{1}{2} \left(\K + \K^\top \right) \:.
\end{equation}

\subsection{From Symmetric Entropy-Constrained OT to Sinkhorn Iterations}\label{sec:proof_sinkhorn}

In this section, we derive Sinkhorn iterations from the problem (\ref{eq:entropy_constrained_OT}). Let $\C \in \mathcal{D}$. We start by making the constraints explicit.
\begin{align}
    \min_{\Pb \in \R_+^{n \times n}} \quad &\langle \Pb, \C \rangle \\
    \text{s.t.} \quad &\sum_{i \in \integ{n}} \operatorname{H}(\Pb_{i:}) \geq \eta \\
    & \Pb \bm{1} = \bm{1}, \quad \Pb = \Pb^\top \:.
\end{align}
For the above convex problem the Lagrangian writes, where $\nu \in \mathbb{R}_+$, $\mathbf{f} \in \mathbb{R}^n$ and $\bm{\Gamma} \in \mathbb{R}^{n \times n}$:
\begin{align}
    \mathcal{L}(\Pb, \mathbf{f}, \nu, \bm{\Gamma}) &= \langle \Pb, \C \rangle + \Big\langle \nu, \eta - \sum_{i \in \integ{n}} \operatorname{H}(\Pb_i) \Big\rangle + 2\langle \mathbf{f}, \bm{1} - \Pb \bm{1} \rangle + \big\langle \bm{\Gamma}, \Pb - \Pb^\top \big\rangle \:.
\end{align}
Strong duality holds and the first order KKT condition gives for the optimal primal $\Pb^\star$ and dual $(\nu^\star, \mathbf{f}^\star, \bm{\Gamma}^\star)$ variables: 
\begin{align}
    \nabla_{\Pb} \mathcal{L}(\Pb^\star, \mathbf{f}^\star, \nu^\star, \bm{\Gamma}^\star) &= \C + \nu^\star \log{\Pb^\star} - 2\mathbf{f}^\star \bm{1}^\top + \bm{\Gamma}^\star - \bm{\Gamma}^{\star\top} = \bm{0} \:.
\end{align}
Since $\Pb^\star, \C \in \mathcal{S}$ we have $\bm{\Gamma}^\star - \bm{\Gamma}^{\star\top} = \mathbf{f}^\star \bm{1}^\top - \bm{1}\mathbf{f}^{\star \top}$. Hence $\C + \nu^\star \log{\Pb^\star} - \mathbf{f}^\star \oplus \mathbf{f}^\star = \bm{0}$. Suppose that $\nu^\star = 0$ then the previous reasoning implies that $\forall (i,j), C_{ij} = f_i^\star + f_j^\star$. Using that $\C \in \mathcal{D}$ we have $C_{ii} = C_{jj} = 0$ thus $\forall i,  f^\star_i = 0$ and thus this would imply that $\C = 0$ which is not allowed by hypothesis. Therefore $\nu^\star \neq 0$ and the entropy constraint is saturated at the optimum by complementary slackness. Isolating $\Pb^\star$ then yields:
\begin{align}
    \Pb^{\star} &= \exp{\left( (\mathbf{f}^\star \oplus \mathbf{f}^{\star} - \C) / \nu^\star \right)} \:.
\end{align}
$\Pb^\star$ must be primal feasible in particular $\Pb^\star \bm{1} = \bm{1}$. This constraint gives us the Sinkhorn fixed point relation for $\mathbf{f}^\star$:
\begin{align}
    \forall i \in \integ{n}, \quad [\mathbf{f}^\star]_i = - \nu^\star \operatorname{LSE} \big((\mathbf{f}^\star - \C_{:i}) / \nu^\star \big)\,,
\end{align}
where for a vector $\bm{\alpha}$, we use the notation
$\operatorname{LSE}(\bm{\alpha}) = \log \sum_{k} \exp (\alpha_k)$.

\subsection{Proof of \cref{prop:entropic_affinity_as_linear_program}}

We recall the result
\entropicaffinityaslinearprogram*

\begin{proof}
We begin by rewriting the above problem to make the constraints more explicit.
\begin{align*}
    \min_{\Pb \in \R_+^{n \times n}} \quad &\langle \Pb, \C \rangle \\
    \text{s.t.} \quad &\forall i, \: \operatorname{H}(\Pb_{i:}) \geq \log{\xi} + 1 \\
    & \Pb \bm{1} = \bm{1} \:.
\end{align*}
By concavity of entropy, one has that the entropy constraint is convex thus the above primal problem is a convex optimization problem. Moreover, the latter is strictly feasible for any $\xi \in \integ{n-1}$. Therefore Slater's condition is satisfied and strong duality holds.

Introducing the dual variables $\bm{\lambda} \in \mathbb{R}^n$ and $\bm{\varepsilon} \in \mathbb{R}_+^n$, the Lagrangian of the above problem writes:
\begin{align}
    \mathcal{L}(\Pb, \bm{\lambda}, \bm{\varepsilon}) &= \langle \Pb, \C \rangle + \langle \bm{\varepsilon}, (\log{\xi}+1) \bm{1} - \operatorname{H}_{\mathrm{r}}(\Pb) \rangle + \langle \bm{\lambda}, \bm{1} - \Pb \bm{1} \rangle\,,
\end{align}
where we recall that $\operatorname{H}_{\mathrm{r}}(\Pb) = \left( \operatorname{H}(\Pb_{i:}) \right)_{i}$. Note that we will deal with the constraint $\Pb \in \R_+^{n \times n}$ directly, hence there is no associated dual variable. Since strong duality holds, for any solution $\Pb^\star$ to the primal problem and any solution $(\bm{\varepsilon}^\star, \bm{\lambda}^\star)$ to the dual problem, the pair $\Pb^\star, (\bm{\varepsilon}^\star, \bm{\lambda}^\star)$ must satisfy the Karush-Kuhn-Tucker (KKT) conditions. The first-order optimality condition gives:
\begin{equation}
\label{kkt_ea}
\tag{first-order}
    \nabla_{\Pb} \mathcal{L}(\Pb^\star, \bm{\varepsilon}^\star, \bm{\lambda}^\star) = \C + \operatorname{diag}(\bm{\varepsilon}^\star)\log{\Pb^\star} - \bm{\lambda}^\star \bm{1}^\top = \bm{0} \:.
\end{equation}
Assume that there exists $\ell \in \integ{n}$ such that $\bm{\varepsilon}_\ell^\star = 0$. Then \eqref{kkt_ea} gives that the $\ell^{th}$ row of $\C$ is constant which is not allowed by hypothesis. Therefore $\bm{\varepsilon}^\star>\bm{0}$ (\ie $\bm{\varepsilon}^\star$ has positive entries). 
Thus isolating $\Pb^\star$ in the first order condition results in:
\begin{align}
    \Pb^\star &= \operatorname{diag}(\mathbf{u}) \exp{(-\operatorname{diag}(\bm{\varepsilon}^\star)^{-1}\C)}
\end{align}
where $\mathbf{u} = \exp{(\bm{\lambda}^\star \oslash \bm{\varepsilon}^\star)}$.
This matrix must satisfy the stochasticity constraint $\Pb \bm{1}=\bm{1}$. Hence one has $\mathbf{u} = \bm{1} \oslash (\exp{(\operatorname{diag}(\bm{\varepsilon}^\star)^{-1}\C)} \bm{1})$ and $\Pb^\star$ has the form
\begin{align}
    \forall (i,j) \in \integ{n}^2, \quad P^{\star}_{ij} = \frac{\exp{(-C_{ij}/\varepsilon^\star_i)}}{\sum_\ell \exp{(-C_{i\ell}/\varepsilon^\star_i)}} \:.
\end{align}
As a consequence of $\bm{\varepsilon}^\star \bm{>} \bm{0}$, complementary slackness in the KKT conditions gives us that for all $i$, the entropy constraint is saturated \ie $\operatorname{H}(\Pb^\star_{i:}) = \log{\xi} + 1$. Therefore $\Pb^\star$ solves the problem \eqref{eq:entropic_affinity_pb}. Conversely any solution of \eqref{eq:entropic_affinity_pb} $P^{\star}_{ij} = \frac{\exp{(-C_{ij}/\varepsilon^\star_i)}}{\sum_\ell \exp{(-C_{i\ell}/\varepsilon^\star_i)}}$ with $(\varepsilon^\star_i)$ such that $\operatorname{H}(\Pb^\star_{i:}) = \log{\xi} + 1$  gives an admissible matrix for $\min_{\Pb \in \mathcal{H}_\xi} \langle \Pb, \C \rangle$ and the associated variables satisfy the KKT conditions which are sufficient conditions for optimality since the problem is convex.
\end{proof}

\subsection{Proof of \cref{prop:saturation_entropies} and \cref{prop:sol_gamma_non_null} \label{proof:main_props}}

The goal of this section is to prove the following results:

\saturation*

\solvingsea*

The unicity of the solution in \cref{prop:saturation_entropies} is a consequence of the following lemma

\begin{lemma} 
\label{lemma:unicity}
  Let $\C \neq 0 \in \mathcal{S}$ with zero diagonal. Then the problem $\min_{\Pb \in \mathcal{H}_{\xi} \cap \mathcal{S}} \: \langle \Pb, \C \rangle$ has a unique solution.
\end{lemma}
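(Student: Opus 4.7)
The plan is to establish existence by compactness and uniqueness by a contradiction argument exploiting strict concavity of the Shannon entropy. Existence is immediate: the set $\mathcal{H}_\xi \cap \mathcal{S}$ is nonempty since it contains $\frac{1}{n}\bm{1}\bm{1}^\top$ (using $\xi \leq n-1$), closed (entropy is continuous on the simplex with the convention $0 \log 0 = 0$, and the remaining constraints are closed), and bounded ($0 \leq P_{ij} \leq 1$); it is therefore compact, and the continuous objective $\Pb \mapsto \langle \Pb, \C \rangle$ attains its minimum. For uniqueness I would argue by contradiction: assume two distinct minimizers $\Pb_1 \neq \Pb_2$. Convexity of the feasible set together with linearity of the objective implies the midpoint $\Pb^m := \tfrac{1}{2}(\Pb_1 + \Pb_2)$ is feasible and optimal. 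Since $\Pb_1 \neq \Pb_2$, some index $i_0$ satisfies $\Pb_{1, i_0:} \neq \Pb_{2, i_0:}$; strict concavity of the Shannon entropy on the probability simplex then yields
\[
\operatorname{H}(\Pb^m_{i_0:}) > \tfrac{1}{2}\!\left[\operatorname{H}(\Pb_{1, i_0:}) + \operatorname{H}(\Pb_{2, i_0:})\right] \geq \log \xi + 1,
\]
so the $i_0$-th entropy constraint is strictly inactive at $\Pb^m$.

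The remaining, and main, step is to exhibit a symmetric zero-row-sum perturbation $\D$ such that $\Pb^m + t\D \in \mathcal{H}_\xi \cap \mathcal{S}$ for all sufficiently small $t > 0$ while $\langle \D, \C \rangle < 0$, contradicting the optimality of $\Pb^m$. The idea is to shift mass in row $i_0$ from a higher-cost off-diagonal entry to a lower-cost one (and symmetrically in column $i_0$), exploiting the slack at row $i_0$ to avoid any first-order infeasibility for its entropy constraint. Since $\C \neq 0$ and $\C_{ii} = 0$, at least one row $\C_{i:}$ is non-constant on the off-diagonal; moreover, by symmetry the $i_0$-th columns of $\Pb_1$ and $\Pb_2$ must also differ, which provides enough freedom to pick two indices $j, k$ so that the perturbation strictly decreases the cost. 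Concretely, I would work with the symmetric zero-row-sum atoms $B_{i_0 j} := \mathbf{e}_{i_0} \mathbf{e}_j^\top + \mathbf{e}_j \mathbf{e}_{i_0}^\top - \mathbf{e}_{i_0}\mathbf{e}_{i_0}^\top - \mathbf{e}_j \mathbf{e}_j^\top$, which satisfy $\langle B_{i_0 j}, \C \rangle = 2 C_{i_0 j}$, and take $\D = \alpha B_{i_0 j} + \beta B_{i_0 k}$ with coefficients $\alpha, \beta$ chosen to make the overall cost variation negative.

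The main obstacle is ensuring $\D$ does not violate the entropy constraints of rows other than $i_0$, nor the nonnegativity of $\Pb^m + t\D$ at the entries where $\Pb^m$ vanishes. A saturated entropy constraint at row $j$ imposes the sign condition $-\langle \D_{j:}, \log \Pb^m_{j:} \rangle \geq 0$ on the perturbation, which may conflict with the sign required for cost decrease. This is handled either by exploiting the freedom to choose among the $n - 1$ candidate indices $j, k$, or by enriching $\D$ with a third symmetric atom supported on the off-diagonal $(j, k)$-pair; this extra degree of freedom can be used to cancel the unwanted first-order entropy variations at rows $j$ and $k$ while preserving the descent direction. Degeneracies coming from zero entries of $\Pb^m$ at positions touched by $\D$ are handled analogously, by selecting indices for which $\Pb^m$ is strictly positive at the relevant coordinates. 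In every case a feasible first-order descent direction can be constructed, yielding the desired contradiction with optimality of $\Pb^m$ and forcing $\Pb_1 = \Pb_2$.
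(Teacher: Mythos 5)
Your existence argument and the first half of your uniqueness argument are fine and match the paper's opening move: form the midpoint $\Pb^m=\tfrac12(\Pb_1+\Pb_2)$, which is feasible and optimal, and use strict concavity of $\operatorname{H}$ to get strict slack in the entropy constraint of a row where $\Pb_1$ and $\Pb_2$ differ. The genuine gap is in the second half. The entire difficulty of the lemma is to exhibit a feasible descent direction at $\Pb^m$, and this is exactly the step you do not carry out: you list the obstacles (saturated entropy constraints at rows $j,k\neq i_0$, vanishing entries of $\Pb^m$ at the perturbed positions, the dependence of the sign of $\langle\D,\C\rangle$ on which entries of $\C_{i_0:}$ you can touch) and then assert that ``in every case a feasible first-order descent direction can be constructed.'' The specific devices you propose do not obviously close this. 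Adding one extra atom $\mu B_{jk}$ gives a single scalar degree of freedom, but you need to control two first-order entropy variations (at rows $j$ and $k$) \emph{and} keep the cost variation strictly negative \emph{and} respect nonnegativity; one parameter cannot in general meet two independent linear conditions. Likewise, the row $i_0$ where you have slack is dictated by where $\Pb_1$ and $\Pb_2$ differ, not by where $\C$ is non-constant, so the claim that you can always ``pick two indices $j,k$ so that the perturbation strictly decreases the cost'' within row $i_0$ is unsupported. As written, the contradiction is not established.

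The paper avoids all of this case analysis with two ideas you are missing. First, it applies the strict-convexity argument to \emph{every} row, obtaining $\operatorname{H}(\Pb^m_{i:})>\log\xi+1$ for all $i$ simultaneously, so that by continuity an entire Frobenius ball around $\Pb^m$ satisfies all the entropy constraints; no entropy constraint is active and no sign condition on the perturbation is needed. (This step itself deserves the caveat you correctly identified --- strictness only holds at rows where $\Pb_{1,i:}\neq\Pb_{2,i:}$ --- but it removes the need to track active constraints.) Second, instead of a local two- or three-atom perturbation, it uses one explicit global direction: the graph Laplacian $\Hb$ of $\C$, with $H_{ij}=-C_{ij}$ off-diagonal and $H_{ii}=\sum_\ell C_{i\ell}$. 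This $\Hb$ is symmetric, satisfies $\Hb\bm1=\bm0$, and gives $\langle\Hb,\C\rangle=-\sum_{i\neq j}C_{ij}^2<0$ unconditionally (using only $\C\neq 0$ with zero diagonal), so $\Pb^m+\beta\Hb$ is a strictly better feasible point for small $\beta>0$. If you want to salvage your route, the cleanest fix is to adopt these two ingredients rather than trying to repair the atom-by-atom construction.
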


\begin{proof}
Making the constraints explicit, the primal problem of symmetric entropic affinity takes the following form
\begin{equation}
\begin{aligned}
    \min_{\Pb \in \R_+^{n \times n}} \quad &\langle \Pb, \C \rangle \\
    \text{s.t.} \quad &\forall i, \: \operatorname{H}(\Pb_{i:}) \geq \log{\xi} + 1 \\
    & \Pb \bm{1} = \bm{1}, \quad \Pb = \Pb^\top \:.
\end{aligned}
\tag{SEA}\label{opt-sym_P}
\end{equation}
Suppose that the solution is not unique \ie there exists a couple of optimal solutions $(\Pb_1, \Pb_2)$ that satisfy the constraints of \eqref{opt-sym_P} and such that $\langle \Pb_1, \C \rangle=\langle \Pb_2, \C \rangle$. For $i \in \integ{n}$, we denote the function $f_{i}: \Pb \rightarrow (\log{\xi} + 1)-\operatorname{H}(\Pb_{i:})$. Then  $f_i$ is continuous, strictly convex and the entropy conditions of  \eqref{opt-sym_P} can be written as $\forall i \in \integ{n}, f_{i}(\Pb) \leq 0$. 

Now consider $\Qb = \frac{1}{2} (\Pb_1 + \Pb_2)$. Then clearly $\Qb \bm{1} = \bm{1}, \Qb = \Qb^\top$. Since $f_i$ is strictly convex we have $f_i(\Qb) = f_i(\frac{1}{2}\Pb_1+\frac{1}{2}\Pb_2) < \frac{1}{2} f_i(\Pb_1) +\frac{1}{2} f(\Pb_2) \leq 0$. Thus $f_i(\Qb) < 0$ for any $i \in \integ{n}$. Take any $\varepsilon >0$ and $i \in \integ{n}$. By continuity of $f_i$ there exists $\delta_i > 0$ such that, for any $\Hb$ with $\|\Hb\|_F \leq \delta_i$, we have $f_{i}(\Qb+\Hb) < f_{i}(\Qb)+ \varepsilon$. Take $\varepsilon >0$ such that $\forall i \in \integ{n},  0 < \varepsilon < -\frac{1}{2} f_{i}(\Qb)$ (this is possible since for any $i \in \integ{n}, f_{i}(\Qb) <0$) and $\Hb$ with $\|\Hb\|_F \leq \min_{i \in \integ{n}} \delta_i$. Then for any $i \in \integ{n}, f_{i}(\Qb+\Hb) < 0$. In other words, we have proven that there exists $\eta > 0$ such that for any $\Hb$ such that $\|\Hb\|_F \leq \eta$, it holds: $\forall i \in \integ{n}, f_i(\Qb+\Hb) < 0$.

Now let us take $\Hb$ as the Laplacian matrix associated to $\C$ \ie for any $(i,j) \in \integ{n}^2$, $H_{ij} = -C_{ij}$ if $i \neq j$ and $\sum_l C_{il}$ otherwise.
Then we have $\langle \Hb, \C \rangle = -\sum_{i \neq j} C_{ij}^2+ 0 = - \sum_{i \neq j} C_{ij}^2 < 0$ since $\C$ has zero diagonal (and is nonzero). Moreover, $\Hb = \Hb^\top$ since $\C$ is symmetric and $\Hb \bm{1} = \bm{0}$ by construction. Consider for $0 < \beta \leq \frac{\eta}{\|\Hb\|_F}$, the matrix $\Hb_{\beta} := \beta \Hb$. Then $\|\Hb_\beta\|_F = \beta \|\Hb\|_F \leq \eta$. By the previous reasoning one has: $\forall i \in \integ{n}, f_i(\Qb+\Hb_{\beta}) < 0$. Moreover, $(\Qb+\Hb_{\beta})^{\top} = \Qb+\Hb_{\beta}$ and $(\Qb+\Hb_{\beta})\bm{1} = \bm{1}$. For $\beta$ small enough we have $\Qb+\Hb_{\beta}\in \R_{+}^{n \times n}$ and thus there is a $\beta$ (that depends on $\Pb_1$ and $\Pb_2$) such that $\Qb+\Hb_{\beta}$ is admissible \ie satisfies the constraints of \eqref{opt-sym_P}. Then, for such $\beta$, 
\begin{equation}
\begin{split}
\langle \C, \Qb+\Hb_{\beta} \rangle- \langle \C, \Pb_1 \rangle &= \frac{1}{2} \langle \C, \Pb_1+ \Pb_2 \rangle + \langle \C, \Hb_{\beta} \rangle -\langle \C, \Pb_1 \rangle\\
&= \langle \C, \Hb_{\beta} \rangle = \beta \langle \Hb, \C \rangle < 0\,.
\end{split}
\end{equation}
Thus $\langle \C, \Qb+\Hb_{\beta} \rangle <\langle \C, \Pb_1 \rangle$ which leads to a contradiction.
\end{proof}
We can now prove the rest of the claims of  \cref{prop:saturation_entropies} and \cref{prop:sol_gamma_non_null}.
\begin{proof}
Let $\C \in \mathcal{D}$. We first prove \cref{prop:saturation_entropies}. The unicity is a consequence of \cref{lemma:unicity}. For the saturation of the entropies we consider the Lagrangian of the problem \eqref{opt-sym_P} that writes $$\mathcal{L}(\Pb, \bm{\lambda}, \bm{\gamma, \bm{\Gamma}}) = \langle \Pb, \C \rangle + \langle \bm{\gamma}, (\log{\xi}+1) \bm{1} - \operatorname{H}_r(\Pb) \rangle + \langle \bm{\lambda}, \bm{1} - \Pb \bm{1} \rangle + \langle \bm{\Gamma}, \Pb - \Pb^\top \rangle$$ for dual variables $\bm{\gamma} \in \mathbb{R}_+^n$, $\bm{\lambda} \in \mathbb{R}^n$ and $\bm{\Gamma} \in \mathbb{R}^{n \times n}$. Strong duality holds by Slater's conditions because $\frac{1}{n} \bm{1} \bm{1}^{\top}$ is stricly feasible for $\xi \leq n-1$. Since strong duality holds, for any solution $\Pb^\star$ to the primal problem and any solution $(\bm{\gamma}^\star, \bm{\lambda}^\star, \bm{\Gamma}^\star)$ to the dual problem, the pair $\Pb^\star, (\bm{\gamma}^\star, \bm{\lambda}^\star, \bm{\Gamma}^\star)$ must satisfy the KKT conditions. They can be stated as follows:
\begin{equation}
    \begin{aligned}
    &\C + \operatorname{diag}(\bm{\gamma}^\star)\log{\Pb^\star} - \bm{\lambda}^\star \bm{1}^\top + \bm{\Gamma}^\star - \bm{\Gamma}^{\star\top} = \bm{0} \\
    &\Pb^\star \bm{1} = \bm{1}, \: \operatorname{H}_r(\Pb^\star) \geq (\log{\xi} + 1)\bm{1}, \: \Pb^\star = \Pb^{\star \top} \\
    &\bm{\gamma}^\star \bm{\geq} \bm{0} \\
    &\forall i, \gamma_i^\star (\operatorname{H}(\Pb_{i:}^\star) - (\log{\xi} + 1)) = 0\:.
\end{aligned}
\tag{KKT-SEA}\label{KKT-sym_P}
\end{equation}
Let us denote $I = \{\ell \in \integ{n} \: \text{s.t.} \: \gamma_\ell^\star = 0\}$. For $\ell \in I$, using the first-order condition, one has for $i \in \integ{n}, C_{\ell i} = \lambda^\star_\ell - \Gamma^\star_{\ell i} + \Gamma^{\star}_{i \ell}$. Since $\C \in \mathcal{D}$, we have $C_{\ell \ell} = 0$ thus $\lambda^\star_\ell = 0$ and $C_{\ell i} = \Gamma^\star_{i \ell} - \Gamma^{\star}_{\ell i}$.
For $(\ell, \ell') \in I^2$, one has $C_{\ell \ell'} = \Gamma^\star_{\ell' \ell} - \Gamma^{\star}_{\ell \ell'} = - (\Gamma^{\star}_{\ell \ell'} - \Gamma^\star_{\ell' \ell}) = - C_{\ell' \ell}$. $\C$ is symmetric thus $C_{\ell \ell'}=0$. Since $\C$ only has null entries on the diagonal, this shows that $\ell = \ell'$ and therefore $I$ has at most one element. By complementary slackness condition (last row of the \ref{KKT-sym_P} conditions) it holds that $\forall i \neq \ell, \operatorname{H}(\Pb^{\star}_{i:}) = \log \xi + 1$. Since the solution of \eqref{opt-sym_P} is unique $\Pb^\star = \Pb^{\mathrm{se}}$ and thus $\forall i \neq \ell, \operatorname{H}(\Pb^{\mathrm{se}}_{i:}) = \log \xi + 1$ which proves \cref{prop:saturation_entropies} but also that for at least $n-1$ indices $\gamma_i^\star > 0$. Moreover, from the KKT conditions we have
\begin{equation}
\forall (i,j) \in \integ{n}^2, \ \Gamma^\star_{ji}-\Gamma^\star_{ij} =  C_{ij}+\gamma^\star_i \log P^\star_{ij}-\lambda^\star_i\,.
\end{equation}
Now take $(i,j) \in \integ{n}^2$ fixed. From the previous equality 
$\Gamma^\star_{ji}-\Gamma^\star_{ij} =  C_{ij}+\gamma^\star_i \log P^\star_{ij}-\lambda^\star_i$ but also $\Gamma^\star_{ij}-\Gamma^\star_{ji} =  C_{ji}+\gamma^\star_j \log P^\star_{ji}-\lambda^\star_j$. Using that $\Pb^\star=(\Pb^{\star})^{\top}$ and $\C \in \mathcal{S}$ we get $\Gamma^\star_{ij}-\Gamma^\star_{ji} =  C_{ij}+\gamma^\star_j \log P^\star_{ij}-\lambda^\star_j$. But $\Gamma^\star_{ij}-\Gamma^\star_{ji} = -(\Gamma^\star_{ji}-\Gamma^\star_{ij})$ which gives
\begin{equation}
C_{ij}+\gamma^\star_j \log P^\star_{ij}-\lambda^\star_j =  -(C_{ij}+\gamma^\star_i \log P^\star_{ij}-\lambda^\star_i)\,.
\end{equation}
This implies
\begin{equation}
\forall (i,j) \in \integ{n}^2, \ 2C_{ij}+(\gamma^\star_i+\gamma^\star_j) \log P^\star_{ij}-(\lambda^\star_i+ \lambda^\star_j) =  0\,.
\end{equation}
Consequently, if $\gammab^\star > 0$ we have the desired form from the above equation and by complementary slackness $\operatorname{H}_{\mathrm{r}}(\Pb^{\mathrm{se}}) = (\log \xi + 1)\bm{1}$ which proves \cref{prop:sol_gamma_non_null}. Note that otherwise, it holds
\begin{equation}
\forall (i,j) \neq (\ell, \ell), \ P_{ij}^\star = \exp \left(\frac{\lambda^\star_i+ \lambda^\star_j-2C_{ij}}{\gamma^\star_i+\gamma^\star_j}\right)\,.
\end{equation}
\end{proof}

\subsection{EA and SEA as a KL projection \label{sec:proj_KL}}

We prove the characterization as a projection of \eqref{eq:entropic_affinity_pb}  in \cref{lemma_ea_proj} and of \eqref{eq:sym_entropic_affinity} in \cref{lemma_sea_proj}.

\begin{lemma}\label{lemma_ea_proj}
    Let $\C \in \mathcal{D}, \sigma >0$ and $\K_\sigma = \exp(-\C/\sigma)$. Then for any $\sigma \leq \min_i \varepsilon^\star_i$, it holds $\Pb^{\mathrm{e}} = \operatorname{Proj}^{\operatorname{\KL}}_{\mathcal{H}_{\xi}}(\K_{\sigma}) =  \argmin_{\Pb \in \mathcal{H}_{\xi} } \KL(\Pb | \K_\sigma)$.
\end{lemma}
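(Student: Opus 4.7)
The strategy is to rewrite the projection problem as a penalized version of \eqref{eq:entropic_affinity_semi_relaxed} and then identify $\Pb^{\mathrm{e}}$ as a KKT point whenever the hypothesis on $\sigma$ holds. Writing out the definition of the KL divergence and using $\log \K_\sigma = -\C/\sigma$, a direct computation yields
\begin{align*}
\sigma \, \KL(\Pb \mid \K_\sigma) \;=\; \langle \Pb, \C \rangle - \sigma \sum_i \operatorname{H}(\Pb_{i:}) \;+\; \sigma \,\langle \Pb, \bm{1}\bm{1}^\top \rangle - \sigma \sum_{ij} P_{ij},
\end{align*}
where the last two terms cancel on feasible points. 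Consequently, minimizing $\KL(\cdot \mid \K_\sigma)$ on $\mathcal{H}_\xi$ is equivalent to minimizing $\Pb \mapsto \langle \Pb, \C\rangle - \sigma \sum_i \operatorname{H}(\Pb_{i:})$ on $\mathcal{H}_\xi$. This objective is strictly convex (since $-\operatorname{H}$ is), the constraint set is convex and nonempty (take $\tfrac{1}{n}\bm{1}\bm{1}^\top$, which even satisfies Slater), so a unique minimizer exists and is fully characterized by its KKT system.

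Next I would write that KKT system, introducing multipliers $\bm{\varepsilon}' \in \R_+^n$ for the row-wise entropy constraints and $\bm{\lambda} \in \R^n$ for row stochasticity. The stationarity equation reads, entrywise,
\begin{align*}
C_{ij} + (\sigma + \varepsilon'^\star_i)\log P^\star_{ij} - \lambda^\star_i \;=\; 0,
\end{align*}
so each row of the optimum is proportional to the Gaussian kernel with \emph{effective bandwidth} $\sigma + \varepsilon'^\star_i$, renormalized by the stochasticity constraint on $\lambda^\star_i$. This has exactly the functional form of an entropic affinity row, with bandwidth $\sigma + \varepsilon'^\star_i$ in place of $\varepsilon^\star_i$.

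The crux is the identification of multipliers: set $\varepsilon'^\star_i := \varepsilon^\star_i - \sigma$, which is nonnegative precisely under the hypothesis $\sigma \le \min_i \varepsilon^\star_i$, ensuring dual feasibility. With this choice the row-$i$ kernel with effective bandwidth $\sigma + \varepsilon'^\star_i = \varepsilon^\star_i$ coincides with $\Pb^{\mathrm{e}}_{i:}$ from \eqref{eq:entropic_affinity_pb}. Primal feasibility $\Pb^{\mathrm{e}} \in \mathcal{H}_\xi$ holds by \cref{prop:entropic_affinity_as_linear_program} since $\operatorname{H}(\Pb^{\mathrm{e}}_{i:}) = \log \xi + 1$, and complementary slackness $\varepsilon'^\star_i\bigl(\operatorname{H}(\Pb^{\mathrm{e}}_{i:}) - \log\xi - 1\bigr) = 0$ holds because the entropy constraint is tight. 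Sufficiency of KKT for this convex problem then identifies $\Pb^{\mathrm{e}}$ with the unique KL projection.

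The main obstacle is understanding why the hypothesis $\sigma \le \min_i \varepsilon^\star_i$ is essential: if $\sigma > \varepsilon^\star_i$ for some $i$, then $\varepsilon^\star_i - \sigma < 0$ and cannot be a dual multiplier, so the projection cannot match $\Pb^{\mathrm{e}}$ on row $i$; the projection would instead inflate the bandwidth to at least $\sigma$, producing a row entropy strictly above $\log\xi + 1$ with the entropy constraint inactive. Everything else — the derivation of stationarity, verification of feasibility and slackness, and invoking uniqueness — is routine once the dual variable $\bm{\varepsilon}'$ has been reinterpreted as an additive shift in the bandwidth.
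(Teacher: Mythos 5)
Your proof is correct and takes essentially the same route as the paper's: both exhibit $\Pb^{\mathrm{e}}$ as the solution of the (strictly convex) projection problem by constructing multipliers for its KKT system, identifying the entropy multiplier through a bandwidth shift — you set it to $\varepsilon_i^\star-\sigma$ so the effective bandwidth $\sigma+(\varepsilon_i^\star-\sigma)=\varepsilon_i^\star$ recovers \eqref{eq:entropic_affinity_pb}, while the paper writes the same thing multiplicatively as $\sigma(1+\kappa_i^\star)=\varepsilon_i^\star$ — with the hypothesis $\sigma\leq\min_i\varepsilon_i^\star$ serving in both cases to guarantee dual feasibility. The only remark worth adding is that the lemma as stated contains a typo: the left-hand side should read $\Pb^{\mathrm{e}}$ rather than $\Pb^{\mathrm{se}}$ (consistent with \cref{Pe_proj} and with the conclusion of the paper's own proof), and that is indeed the claim you prove.
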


\begin{proof}
The $\KL$ projection of $\K$ onto $\mathcal{H}_{\xi}$ reads
\begin{align}
    \min_{\Pb \in \R_+^{n \times n}} \quad &\operatorname{KL}(\Pb | \K) \\
    \text{s.t.} \quad &\forall i, \: \operatorname{H}(\Pb_{i:}) \geq \log{\xi} + 1 \\
    & \Pb \bm{1} = \bm{1} \:.
\end{align}
Introducing the dual variables $\bm{\lambda} \in \mathbb{R}^n$ and $\bm{\kappa} \in \mathbb{R}_+^n$, the Lagrangian of this problem reads:
\begin{align}
    \mathcal{L}(\Pb, \bm{\lambda}, \bm{\kappa}) &=  \operatorname{KL}(\Pb | \K)  + \langle \bm{\kappa}, (\log{\xi} + 1) \bm{1} - \operatorname{H}(\Pb) \rangle + \langle \bm{\lambda}, \bm{1} - \Pb \bm{1} \rangle
\end{align}
Strong duality holds hence for any solution $\Pb^\star$ to the above primal problem and any solution $(\bm{\kappa}^\star, \bm{\lambda}^\star)$ to the dual problem, the pair $\Pb^\star, (\bm{\kappa}^\star, \bm{\lambda}^\star)$ must satisfy the KKT conditions. The first-order optimality condition gives:
\begin{align}
    \nabla_{\Pb} \mathcal{L} (\Pb^\star, \bm{\kappa}^\star, \bm{\lambda}^\star) &= \log \left( \Pb^\star \oslash \K \right) + \operatorname{diag}(\bm{\kappa}^\star)\log{\Pb^\star} - \bm{\lambda}^\star \bm{1}^\top = \bm{0} \:.
\end{align}
Solving for $\bm{\lambda}^\star$ given the stochasticity constraint and isolating $\Pb^\star$ gives
\begin{align}
    \forall (i,j) \in \integ{n}^2, \quad P^\star_{ij} = \frac{\exp{((\log K_{ij})/(1 + \kappa^\star_i)})}{\sum_\ell \exp{((\log K_{i\ell})/(1 + \kappa^\star_i)})} \:.
\end{align}
We now consider $\Pb^\star$ as a function of $\bm{\kappa}$. Plugging this expression back in $\mathcal{L}$ yields the dual function $\bm{\kappa} \mapsto \mathcal{G}(\bm{\kappa})$. The latter is concave as any dual function and its gradient reads:
\begin{align}
    \nabla_{\bm{\kappa}} \mathcal{G}(\bm{\kappa}) = (\log \xi + 1 )\bm{1} - \operatorname{H}(\Pb^\star(\bm{\kappa})) \:.
\end{align}
Denoting by $\bm{\rho} = \bm{1} + \bm{\kappa}$ and taking the dual feasibility constraint $\bm{\kappa} \bm{\geq} \bm{0}$ into account gives the solution: for any $i$, $\rho^\star_i = \max(\varepsilon^\star_i, 1)$ where $\bm{\varepsilon}^\star$ solves (\ref{eq:entropic_affinity_pb}) with cost $\C = -\log \K$.
Moreover we have that $\sigma \leq \min(\bm{\varepsilon}^\star)$ where $\bm{\varepsilon}^\star \in (\R^*_+)^n$ solves (\ref{eq:entropic_affinity_pb}). Therefore for any $i \in \integ{n}$, one has $\varepsilon_i^\star / \sigma \geq 1$. Thus there exists $\kappa_i^\star \in \R_+$ such that $\sigma (1 + \kappa_i^\star) = \varepsilon_i^\star$. 

This $\bm{\kappa}^\star$ cancels the above gradient \ie $(\log \xi + 1 )\bm{1} = \operatorname{H}(\Pb^\star(\bm{\kappa}^\star))$ thus solves the dual problem. Therefore given the expression of $\Pb^\star$ we have that $\operatorname{Proj}^{\operatorname{\KL}}_{\mathcal{H}_{\xi}}(\K) = \Pb^{\mathrm{e}}$.
\end{proof}

\begin{lemma}\label{lemma_sea_proj}
Let $\C \in \mathcal{D}, \sigma >0$ and $\K_\sigma = \exp(-\C/\sigma)$. Suppose that the optimal dual variable $\gamma^\star$ associated with the entropy constraint of \eqref{opt-sym_P} is positive. Then for any $\sigma \leq \min_i \gamma^\star_i$, it holds $\Pb^{\mathrm{se}} = \operatorname{Proj}^{\operatorname{\KL}}_{\mathcal{H}_{\xi} \cap
  \mathcal{S}}(\K_{\sigma})$.
\end{lemma}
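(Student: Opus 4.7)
My plan is to mimic the strategy used in the proof of \cref{lemma_ea_proj} (the EA version) but taking into account the symmetry constraint, as handled in the proof of \cref{prop:sol_gamma_non_null}. Concretely, I would write down the KKT system for the KL projection problem
\begin{equation*}
\min_{\Pb \in \R_+^{n \times n}} \KL(\Pb | \K_\sigma) \quad \text{s.t.} \quad \Pb\bm{1}=\bm{1}, \ \Pb=\Pb^\top, \ \forall i, \ \operatorname{H}(\Pb_{i:}) \geq \log\xi+1,
\end{equation*}
introducing dual variables $\bm{\mu} \geq \bm{0}$, $\bm{\lambda}$, $\bm{\Gamma}$ for the three groups of constraints. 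Strong duality holds because $\tfrac{1}{n}\bm{1}\bm{1}^\top$ is strictly feasible for $\xi \leq n-1$, so any primal-dual pair satisfying KKT is optimal.

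The first-order condition reads $\log(\Pb \oslash \K_\sigma) + \diag(\bm{1}+\bm{\mu})\log \Pb \text{ (minus one copy of } \log \Pb \text{ already absorbed)} - \bm{\lambda}\bm{1}^\top + \bm{\Gamma} - \bm{\Gamma}^\top = \bm{0}$, i.e.\ $\C/\sigma + \diag(\bm{1}+\bm{\mu})\log\Pb - \bm{\lambda}\bm{1}^\top + \bm{\Gamma}-\bm{\Gamma}^\top = \bm{0}$. Writing this equation at $(i,j)$ and at $(j,i)$, adding them and using $\Pb=\Pb^\top$ and $\C \in \mathcal{S}$ to kill the antisymmetric $\bm{\Gamma}$ part (exactly as in the proof of \cref{prop:sol_gamma_non_null}), I obtain
\begin{equation*}
(2+\mu_i+\mu_j)\log P_{ij}^\star = \lambda_i+\lambda_j - 2C_{ij}/\sigma.
\end{equation*}

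Next I would try the ansatz $\mu_i = \gamma_i^\star/\sigma - 1$ and $\lambda_i = \lambda_i^\star/\sigma$, where $(\gamma^\star,\lambda^\star)$ denote the dual optima of \eqref{eq:sym_entropic_affinity} given by \cref{prop:sol_gamma_non_null}. The hypothesis $\sigma \leq \min_i \gamma_i^\star$ is exactly what is needed to guarantee dual feasibility $\bm{\mu}\geq\bm{0}$. Plugging this into the above identity and comparing with the closed form for $\Pb^{\mathrm{se}}$ given by \cref{prop:sol_gamma_non_null}, the two expressions coincide, so $\Pb^\star = \Pb^{\mathrm{se}}$. Primal feasibility is immediate since $\Pb^{\mathrm{se}}$ is symmetric, doubly stochastic, and has $\operatorname{H}(\Pb_{i:}^{\mathrm{se}})=\log\xi+1$ for every $i$ (which also gives complementary slackness $\mu_i\bigl((\log\xi+1)-\operatorname{H}(\Pb_{i:}^{\mathrm{se}})\bigr)=0$ regardless of the sign of $\mu_i$). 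For the $\bm{\Gamma}$ dual variable, I would simply use the same construction as for SEA: the antisymmetric part $\Gamma-\Gamma^\top$ is free to be chosen to make the non-combined first-order equation hold (after dividing by $\sigma$), mirroring the argument for SEA.

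The main obstacle is bookkeeping rather than conceptual: one must check that the rescaled KKT system for the KL projection is solved by the rescaling of SEA's KKT system, and that the condition $\sigma \leq \min_i \gamma_i^\star$ precisely corresponds to dual feasibility of the new $\bm{\mu}$. Since the problem is convex and Slater's condition holds, verifying KKT is sufficient to conclude optimality, and uniqueness of the minimizer of a strictly convex objective (KL is strictly convex in $\Pb$) yields $\operatorname{Proj}^{\operatorname{\KL}}_{\mathcal{H}_\xi\cap\mathcal{S}}(\K_\sigma)=\Pb^{\mathrm{se}}$.
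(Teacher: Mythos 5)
Your proposal is correct and follows essentially the same route as the paper: both write the KKT system for the KL projection onto $\mathcal{H}_\xi \cap \mathcal{S}$, rescale the optimal dual variables of \eqref{opt-sym_P} via $\omega_i = \gamma_i^\star/\sigma - 1$, $\mu_i = \lambda_i^\star/\sigma$, $\beta_{ij} = \Gamma_{ij}^\star/\sigma$, use $\sigma \leq \min_i \gamma_i^\star$ exactly for dual feasibility of the entropy multiplier, and conclude by sufficiency of KKT plus uniqueness of the projection. The only cosmetic difference is that you recover the symmetrized (combined) stationarity equation first and then argue the antisymmetric $\bm{\Gamma}$-part can be back-filled, whereas the paper verifies the unsymmetrized first-order condition directly with the rescaled $\bm{\Gamma}^\star$; both are valid.
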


\begin{proof}

Let $\sigma > 0$. The $\KL$ projection of $\K$ onto $\mathcal{H}_{\xi} \cap \mathcal{S}$ boils down to the following optimization problem.
\begin{equation}
\label{projkl}
\begin{split}
    \min_{\Pb \in \R_+^{n \times n}} \quad &\operatorname{KL}(\Pb | \K_\sigma) \\
    \text{s.t.} \quad &\forall i, \: \operatorname{H}(\Pb_{i:}) \geq \log{\xi} + 1 \\
    & \Pb \bm{1} = \bm{1}, \quad \Pb^\top = \Pb \:.
\end{split}
\tag{SEA-Proj}
\end{equation}
By strong convexity of $\Pb \rightarrow \KL(\Pb | \K_\sigma)$ and convexity of the constraints the problem \eqref{projkl} admits a unique solution. Moreover, the Lagrangian of this problem takes the following form, where $\bm{\omega} \in \mathbb{R}_+^n$, $\bm{\mu} \in \mathbb{R}^n$ and $\bm{\Gamma} \in \mathbb{R}^{n \times n}$:
\begin{align*}
    \mathcal{L}(\Pb, \bm{\mu}, \bm{\omega}, \bm{\Gamma}) &= \operatorname{KL}(\Pb | \K_\sigma) + \langle \bm{\omega}, (\log{\xi} + 1) \bm{1} - \operatorname{H}_r(\Pb) \rangle + \langle \bm{\mu}, \bm{1} - \Pb \bm{1} \rangle + \langle \bm{\beta}, \Pb - \Pb^\top \rangle \:.
\end{align*}
Strong duality holds by Slater's conditions thus the KKT conditions are necessary and sufficient. In particular if $\Pb^\star$ and $(\bm{\omega}^\star, \bm{\mu}^\star, \bm{\beta}^\star)$ satisfy
\begin{equation}
\label{kkt1}
\begin{split}
    &\nabla_\Pb \mathcal{L}(\Pb^\star, \bm{\mu}^\star, \bm{\omega}^\star, \bm{\Gamma}^\star) = \log \left( \Pb^\star \oslash \K \right) + \operatorname{diag}(\bm{\omega}^\star)\log{\Pb^\star} - \bm{\mu}^\star \bm{1}^\top + \bm{\beta}^\star - \bm{\beta}^{\star\top} = \bm{0} \\
    &\Pb^\star \bm{1} = \bm{1}, \: \operatorname{H}_r(\Pb^\star) \geq (\log{\xi} + 1)\bm{1}, \: \Pb^\star = \Pb^{\star \top} \\
    &\bm{\omega}^\star \bm{\geq} \bm{0} \\
    &\forall i, \omega_i^\star (\operatorname{H}(\Pb_{i:}^\star) - (\log{\xi} + 1)) = 0\:.
\end{split}
\tag{KKT-Proj}
\end{equation}
then $\Pb^\star$ is a solution to \eqref{projkl} and $(\bm{\omega}^\star, \bm{\mu}^\star, \bm{\beta}^\star)$ are optimal dual variables. The first condition rewrites
\begin{equation}
\forall (i,j), \ \log(P^\star_{ij}) + \frac{1}{\sigma} C_{ij}+ \omega_i^\star \log(P^\star_{ij}) -\mu^\star_i+\beta_{ij}^\star-\beta_{ji}^\star = 0\,,
\end{equation}
which is equivalent to 
\begin{equation}
\forall (i,j), \ \sigma(1+\omega_i^\star)\log(P^\star_{ij}) + C_{ij} -\sigma \mu^\star_i+\sigma(\beta_{ij}^\star-\beta_{ji}^\star) = 0\,.
\end{equation}
Now take $\Pb^{\mathrm{se}}$ the optimal solution of \eqref{opt-sym_P}. As written in the proof \cref{prop:sol_gamma_non_null} of  $\Pb^{\mathrm{se}}$ and the optimal dual variables $(\bm{\gamma}^\star, \bm{\lambda}^\star, \bm{\Gamma}^\star)$ satisfy the KKT conditions:
\begin{equation}
\label{eq:kkt2}
    \begin{aligned}
    &\forall (i,j), \ C_{ij} + \gamma_i^\star\log{P^{\mathrm{se}}_{ij}} - \lambda_i^\star + \Gamma^\star_{ij}-\Gamma^\star_{ji} = \bm{0} \\
    &\Pb^{\mathrm{se}} \bm{1} = \bm{1}, \: \operatorname{H}_r(\Pb^{\mathrm{se}}) \geq (\log{\xi} + 1)\bm{1}, \: \Pb^{\mathrm{se}} = (\Pb^{\mathrm{se}})^\top \\
    &\bm{\gamma}^\star \bm{\geq} \bm{0} \\
    &\forall i, \gamma_i^\star (\operatorname{H}(\Pb^{\mathrm{se}}_{i:}) - (\log{\xi} + 1)) = 0\:.
\end{aligned}
\tag{KKT-SEA}
\end{equation}
By hypothesis $\gammab^\star > 0$ which gives $\forall i, \operatorname{H}(\Pb^{\mathrm{se}}_{i:}) - (\log{\xi} + 1) = 0$. Now take $0 < \sigma \leq \min_i \gamma^\star_i$ and define $\forall i, \omega_i^\star = \frac{\gamma_i^\star}{\sigma} -1$. Using the hypothesis on $\sigma$ we have $\forall i, \omega_i^\star \geq 0$ and $\bm{\omega}^\star$ satisfies $\forall i, \ \sigma(1+\omega^\star_i) = \gamma_{i}^\star$. Moreover for any $i \in \integ{n}$
\begin{equation}
\omega_i^\star(\operatorname{H}(\Pb^{\mathrm{se}}_{i:}) - (\log{\xi} + 1)) = 0 \,.
\end{equation}
Define also $\forall i, \mu_i^\star = \lambda_i^\star/\sigma$ and $\forall (i,j), \beta_{ij}^\star = \Gamma_{ij}^\star/\sigma$. Since $\Pb^{\mathrm{se}}, (\bm{\gamma}^\star, \bm{\lambda}^\star, \bm{\Gamma}^\star)$ satisfies the KKT conditions \eqref{eq:kkt2} then by the previous reasoning $\Pb^{\mathrm{se}}, (\bm{\omega}^\star, \bm{\mu}^\star, \bm{\beta}^\star)$ satisfy the KKT conditions \eqref{kkt1} and in particular $\Pb^{\mathrm{se}}$ is an optimal solution of \eqref{projkl} since KKT conditions are sufficient. Thus we have proven that $\Pb^{\mathrm{se}} \in \argmin_{\Pb \in \mathcal{H}_{\xi} \cap \mathcal{S}} \KL(\Pb | \K_\sigma)$ and by the uniqueness of the solution this is in fact an equality.
\end{proof}

\section{Sensitivity Analysis for Dimensionality Reduction Experiments}\label{sec:sensitivity_dr}

In \cref{fig:sensitivity_dr}, we extend the sensitivity analysis performed for spectral clustering (Figure 5) to DR scores. One can notice that tSNEkhorn outperforms tSNE on a wide range of perplexity values.
\begin{figure}[h]
    \begin{center}
    \centerline{\includegraphics[width=1\columnwidth]{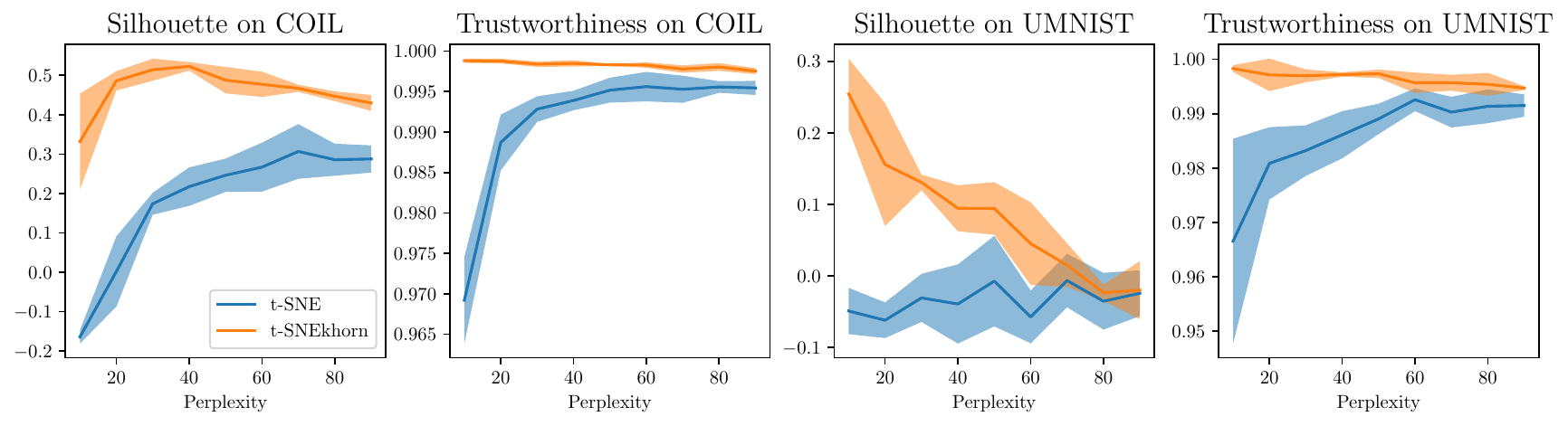}}
    \caption{Dimensionality reduction scores as a function of the perplexity parameter.}
    \label{fig:sensitivity_dr}
    \end{center}
\end{figure}

	
\end{document}